\def\eqref#1{equation~\ref{#1}}
\def\1{\bm{1}}
\DeclareMathAlphabet{\mathsfit}{\encodingdefault}{\sfdefault}{m}{sl}
\SetMathAlphabet{\mathsfit}{bold}{\encodingdefault}{\sfdefault}{bx}{n}
\DeclareMathOperator*{\argmax}{arg\,max}
\DeclareMathOperator*{\argmin}{arg\,min}
\newtheorem{prop}{Proposition}
\newcommand\independent{\protect\mathpalette{\protect\independenT}{\perp}}
\def\independenT#1#2{\mathrel{\rlap{$#1#2$}\mkern2mu{#1#2}}}
\newcommand{\method}[1]{{\small\textsc{#1}}}
\def\signed #1{{\leavevmode\unskip\nobreak\hfil\penalty50\hskip2em
  \hbox{}\nobreak\hfil(#1)%
  \parfillskip=0pt \finalhyphendemerits=0 \endgraf}}
\newsavebox\mybox
\title{Causal Confusion in Imitation Learning}
\author{%
   Pim de Haan$^{*}$\textsuperscript{\normalfont 1}, Dinesh Jayaraman$^{\dagger\ddagger}$, Sergey Levine$^\dagger$\\
   $^*$Qualcomm AI Research, University of Amsterdam, \\$^\dagger$Berkeley AI Research, $^\ddagger$ Facebook AI Research
}
\begin{document}
\footnotetext[1]{Work mostly done while at Berkeley AI Research.}

\maketitle

\begin{abstract}
Behavioral cloning reduces policy learning to supervised learning by training a discriminative model to predict expert actions given observations. Such discriminative models are non-causal: the training procedure is unaware of the causal structure of the interaction between the expert and the environment. We point out that ignoring causality is particularly damaging because of the distributional shift in imitation learning. In particular, it leads to a counter-intuitive ``causal misidentification'' phenomenon: access to more information can yield worse performance. We investigate how this problem arises, and propose a solution to combat it through targeted interventions---either environment interaction or expert queries---to determine the correct causal model. We show that causal misidentification occurs in several benchmark control domains as well as realistic driving settings, and validate our solution against DAgger and other baselines and ablations.

\end{abstract}

\section{Introduction}\label{sec:intro}
\vspace{-0.1in}

Imitation learning allows for control policies to be learned directly from example demonstrations provided by human experts. It is easy to implement, and reduces or removes the need for extensive interaction with the environment during training~\cite{widrow1964pattern,pomerleau1989alvinn,bojarski2016end,argall2009survey,hussein2017imitation}.

However, imitation learning suffers from a fundamental problem: distributional shift~\cite{daume2009search,ross2010efficient}. %
Training and testing state distributions are different, induced respectively by the expert and learned policies. Therefore, imitating expert actions on expert trajectories may not align with the true task objective. While this problem is widely acknowledged~\cite{pomerleau1989alvinn,daume2009search,ross2010efficient,ross2011reduction}, yet with careful engineering, na\"{i}ve behavioral cloning approaches have yielded good results for several practical problems~\cite{widrow1964pattern,pomerleau1989alvinn,schaal1999imitation,dave,mulling2013learning,bojarski2016end,mahler2017learning,chauffeurnet}. This raises the question: is distributional shift really still a problem?

In this paper, we identify a somewhat surprising and very problematic effect of distributional shift: ``causal misidentification.'' %
Distinguishing correlates of expert actions in the demonstration set from true causes is usually very difficult, but may be ignored without adverse effects when training and testing distributions are identical (as assumed in supervised learning), since nuisance correlates continue to hold in the test set. %
However, this can cause catastrophic problems in imitation learning due to distributional shift.
This is exacerbated by the causal structure of sequential action: the very fact that current actions cause future observations often introduces complex new nuisance correlates. %

To illustrate, consider behavioral cloning to train a neural network to drive a car. In scenario A, the model's input is an image of the dashboard and windshield, and in scenario B, the input to the model (with identical architecture) is the same image but with the dashboard masked out (see Fig~\ref{fig:causal_misidentification}). Both cloned policies achieve low training loss, but when tested on the road, model B drives well, while model A does not.
The reason: the dashboard has an indicator light that comes on immediately when the brake is applied, and model A wrongly learns to apply the brake only when the brake light is on. Even though the brake light is the \emph{effect} of braking, model A could achieve low training error by \emph{misidentifying} it as the cause instead.

\begin{figure}
    \centering
    \includegraphics[width=1\linewidth]{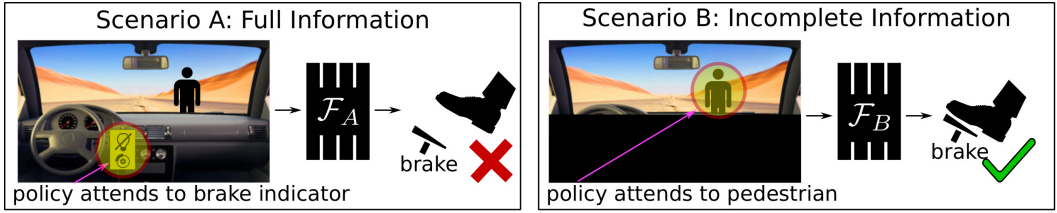}
    \caption{\small Causal misidentification: \emph{more} information yields worse imitation learning performance. %
    Model A
    relies on the braking indicator to decide whether to brake. %
    Model B instead correctly attends to the pedestrian.}
    \label{fig:causal_misidentification}
    \vspace{-0.2in}
\end{figure}

This situation presents a give-away symptom of causal misidentification: access to \emph{more information} leads to \emph{worse generalization performance} in the presence of distributional shift.
Causal misidentification occurs commonly in natural imitation learning settings, especially when the imitator's inputs include history information.

In this paper, we first point out and investigate the causal misidentification problem in imitation learning. 
Then, we propose a solution to overcome it by learning the correct causal model, even when using complex deep neural network policies. 
We learn a mapping from causal graphs to policies, and then use targeted interventions to efficiently search for the correct policy, either by querying an expert, or by executing selected policies in the environment.

\vspace{-0.1in}
\section{Related Work}\label{sec:related}
\vspace{-0.1in}
\noindent \textbf{Imitation learning.}~~~Imitation learning through behavioral cloning dates back to Widrow and Smith, 1964~\cite{widrow1964pattern}, %
and has remained popular through today~\cite{pomerleau1989alvinn,schaal1999imitation,dave,mulling2013learning,bojarski2016end,giusti2016machine,mahler2017learning,Wang2019-us,chauffeurnet}. The distributional shift problem, wherein a cloned policy encounters unfamiliar states during autonomous execution, has been identified as an issue in imitation learning~\cite{pomerleau1989alvinn,daume2009search,ross2010efficient,ross2011reduction,laskey2017dart,ho2016generative,chauffeurnet}. This is closely tied to the ``feedback'' problem in general machine learning systems that have direct or indirect access to their own past states~\cite{sculley2014machine,feedback_video}. For imitation learning, various solutions to this problem have been proposed~\citep{daume2009search,ross2010efficient,ross2011reduction} that rely on iteratively querying an expert based on states encountered by some intermediate cloned policy, to overcome distributional shift; DAgger~\cite{ross2011reduction} has come to be the most widely used of these solutions. 

We show evidence that the distributional shift problem in imitation learning is often due to causal misidentification, as illustrated schematically in Fig~\ref{fig:causal_misidentification}. %
We propose to address this through targeted interventions on the states to learn the true causal model to overcome distributional shift. As we will show, these interventions can take the form of either environmental rewards with no additional expert involvement, or of expert queries in cases where the expert is available for additional inputs. In expert query mode, our approach may be directly compared to DAgger~\citep{ross2011reduction}: indeed, we show that we successfully resolve causal misidentification using orders of magnitude fewer queries than DAgger. 

We also compare against ~\citet{chauffeurnet}: to prevent imitators from copying past actions, they train with dropout~\cite{dropout} on dimensions that might reveal past actions. While our approach seeks to find the true causal graph in a mixture of graph-parameterized policies, dropout corresponds to directly applying the mixture policy. In our experiments, our approach performs significantly better.

\noindent \textbf{Causal inference.}~~~Causal inference is the general problem of deducing cause-effect relationships among variables~\citep{spirtes2000causation,pearl2009causality,peters2017elements,spirtes2010introduction,eberhardt2017introduction,spirtes2016causal}. ``Causal discovery'' approaches allow causal inference from pre-recorded observations under constraints~\citep{steyvers2003inferring,Heckerman2006bayesiancausal,lopez2017discovering,guyon2008design,louizos2017causal,maathuis2010predicting,le2016fast,goudet2017learning,mitrovic2018causal,wang2018blessings}. Observational causal inference is known to be impossible in general~\citep{pearl2009causality,peters2014causal}. We operate in the interventional regime~\citep{tong2001active,eberhardt2007interventions,shanmugam2015learning,sen2017identifying} where a user may ``experiment'' to discover causal structures by assigning values to some subset of the variables of interest and observing the effects on the rest of the system. We propose a new interventional causal inference approach suited to imitation learning. While ignoring causal structure is particularly problematic in imitation learning, ours is the first effort directly addressing this, to our knowledge.

\vspace{-0.1in}
\section{The Phenomenon of Causal Misidentification}\label{sec:problem}
\vspace{-0.1in}

In imitation learning, an expert demonstrates how to perform a task (e.g.,~driving a car) for the benefit of an agent. %
In each demo, the agent has access both to its $n$-dim.~state observations at each time $t$, $X^t=[X^t_1, X^t_2, \ldots X_n^t]$ (e.g.,~a video feed from a camera), and to the expert's action $A^t$ (e.g.,~steering, acceleration, braking). Behavioral cloning approaches learn a mapping $\pi$ from $X^t$ to $A^t$ using all $(X^t, A^t)$ tuples from the demonstrations. At test time, the agent observes $X^t$ and executes $\pi(X^t)$.

\begin{wrapfigure}[10]{r}{0.4\linewidth}
  \vspace{-0.15in}
  \centering
    \includegraphics[width=0.4\textwidth]{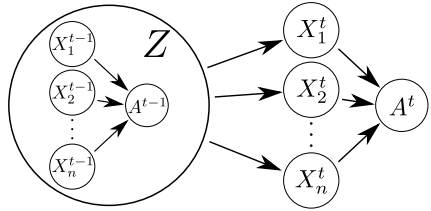}
    \caption{\small Causal dynamics of imitation. Parents of a node represent its causes. %
    }\label{fig:graph}
\end{wrapfigure}
The underlying sequential decision process has complex causal structures, represented in Fig~\ref{fig:graph}. %
States influence future expert actions, and are also themselves influenced by past actions and states.

In particular, expert actions $A^t$ are influenced by \emph{some} information in state $X^t$, and unaffected by the rest.
For the moment, assume that the dimensions $X_1^t, X_2^t, X_3^t, \dots$ of $X^t$ represent disentangled factors of variation.
Then some unknown subset of these factors (``causes'') affect expert actions, and the rest do not (``nuisance variables''). A confounder $Z^t=[X^{t-1}, A^{t-1}]$ influences each state variable in $X^t$, so that some nuisance variables may still be correlated with $A^t$ among $(X^t, A^t)$ pairs from demonstrations. In Fig~\ref{fig:causal_misidentification}, the dashboard light is a nuisance variable. %

A na\"ive behavioral cloned policy might rely on nuisance correlates to select actions, producing low training error, and even generalizing to held-out $(X^t, A^t)$ pairs. However, this policy must contend with distributional shift when deployed: actions $A_t$ are chosen by the \emph{imitator} rather than the expert, affecting the distribution of $Z^t$ and $X^t$.  This in turn affects the policy mapping from $X^t$ to $A^t$, leading to poor performance of expert-cloned policies. We define ``causal misidentification" as the phenomenon whereby cloned policies fail by misidentifying the causes of expert actions.

\vspace{-0.1in}
\subsection{Robustness and Causality in Imitation Learning}\label{sec:causal_dist_shift}

Intuitively, distributional shift affects the relationship of the expert action $A^t$ to nuisance variables, but not to the true causes. In other words, to be maximally robust to distributional shift, a policy must rely solely on the true causes of expert actions, thereby avoiding causal misidentification. This intuition can be formalized in the language of functional causal models (FCM) and interventions~\cite{pearl2009causality}.

\noindent \textbf{Functional causal models:} A functional causal model (FCM) over a set of variables $\{Y_i\}_{i=1}^n$ is a tuple $(G, \theta_{G})$ containing a graph $G$ over $\{Y_i\}_{i=1}^n$, and deterministic functions $f_i(\cdot ; \theta_{G})$ with parameters $\theta_G$ describing how the causes of each variable $Y_i$ determine it:
    $Y_i = f_i(Y_{\text{Pa(i;G)}}, E_i ; \theta_G),$
where $E_i$ is a stochastic noise variable that represents all external influences on $Y_i$, and $\text{Pa}(i; G)$ denote the indices of parent nodes of $Y_i$, which correspond to its causes.

An ``intervention'' $do(Y_i)$ on $Y_i$ to set its value may now be represented by a structural change in this graph to produce the ``mutilated graph'' $G_{\bar{Y_i}}$, in which incoming edges to $Y_i$ are removed.\footnote{For a more thorough overview of FCMs, see~\cite{pearl2009causality}.}

Applying this formalism to our imitation learning setting, any distributional shift in the state $X^t$ may be modeled by intervening on $X^t$, so that correctly modeling the ``interventional query'' $p(A^t|do(X^t))$ is sufficient for robustness to distributional shifts. Now, we may formalize the intuition that only a policy relying solely on true causes can robustly model the mapping from states to optimal/expert actions under distributional shift. 
In Appendix \ref{app:prop}, we prove that under mild assumptions, correctly modeling interventional queries does indeed require learning the correct causal graph $G$. In the car example, ``setting'' the brake light to on or off and observing the expert's actions would yield a clear signal unobstructed by confounders: the brake light does not affect the expert's braking behavior.

\vspace{-0.1in}
\subsection{Causal Misidentification in Policy Learning Benchmarks and Realistic Settings}\label{sec:benchmark}

Before discussing our solution, we first present several testbeds and real-world cases where causal misidentification adversely influences imitation learning performance.

\noindent \textbf{Control Benchmarks.}~~~We show that causal misidentification is induced with small changes to widely studied benchmark control tasks, simply by adding more information to the state, which intuitively ought to make the tasks easier, not harder. In particular, we add information about the previous action, which tends to correlate with the current action in the expert data for many standard control problems. This is a proxy for scenarios like our car example, in which correlates of past actions are observable in the state, and is similar to what we might see from other sources of knowledge about the past, such as memory or recurrence.
We study three kinds of tasks: (i) MountainCar (continuous states, discrete actions), (ii) MuJoCo Hopper (continuous states and actions), (iii) Atari games: Pong, Enduro and UpNDown (states: two stacked consecutive frames, discrete actions).

\begin{wrapfigure}[10]{r}{0.47\linewidth}
  \vspace{-0.15in}
    \begin{subfigure}[b]{0.15\textwidth}
      \includegraphics[width=\textwidth]{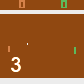}
    \caption{Pong}
    \end{subfigure}
    \begin{subfigure}[b]{0.15\textwidth}
      \includegraphics[width=\textwidth]{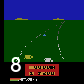}
    \caption{Enduro}
    \end{subfigure}
    \begin{subfigure}[b]{0.15\textwidth}
      \includegraphics[width=\textwidth]{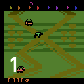}
    \caption{UpNDown}
    \end{subfigure}
        \caption{\small The Atari environments with indicator of past action (white number in lower left).}    \label{fig:environments}
\end{wrapfigure}
For each task, we study imitation learning
in two scenarios. In scenario A (henceforth called "\method{confounded}"),
the policy sees the augmented observation vector, including the previous action. In the case of low-dimensional observations, the state vector is expanded to include the previous action at an index that is unknown to the learner. In the case of image observations, we overlay a symbol corresponding to the previous action at an unknown location on the image (see Fig~\ref{fig:environments}). In scenario B ("\method{original}"), the previous action variable is replaced with random noise for low-dimensional observations. For image observations, the original images are left unchanged. Demonstrations are generated synthetically as described in Appendix~\ref{app:demonstrations}. In all cases, we use neural networks with identical architectures to represent the policies, and we train them on the same demonstrations.
Fig~\ref{fig:diagnosis} shows the rewards against varying demonstration dataset sizes for MountainCar, Hopper, and Pong. Appendix~\ref{app:diagnosis} shows additional results, including for Enduro and UpNDown.  All policies are trained to near-zero validation error on held-out expert state-action tuples. \method{original} produces rewards tending towards expert performance as the size of the imitation dataset increases. \method{confounded} either requires  many more demonstrations to reach equivalent performance, or fails completely to do so.

Overall, the results are clear: across these tasks, access to \emph{more} information leads to inferior performance. As Fig~\ref{fig:diagnosis_full} in the appendix shows, this difference is not due to different training/validation losses on the expert demonstrations---for example, in Pong, \method{confounded} produces lower validation loss than \method{original} on held-out demonstration samples, but produces lower rewards when actually used for control.
These results not only validate the existence of causal misidentification, but also provides us with testbeds for investigating a potential solution. %

\noindent \textbf{Real-World Driving.}~~~
Our testbeds introduce deliberate nuisance variables to the ``original'' observation variables for ease of evaluation, but evidence suggests that misattribution is pervasive in common real-world imitation learning settings. Real-world problems often have no privileged ``original'' observation space, and very natural-seeming state spaces may still include nuisance factors---as in our dashboard light setting (Fig~\ref{fig:causal_misidentification}), where causal misattribution occurs when using the full image from the camera.  

In particular, history would seem a natural part of the state space for real-world driving, yet recurrent/history-based imitation has been consistently observed in prior work to hurt performance, thus exhibiting clear symptoms of causal misidentification~\cite{dave,Wang2019-us,chauffeurnet}.
While these histories contain valuable information for driving, they also naturally introduce information about nuisance factors such as previous actions. In all three cases, more information led to worse results for the behavioral cloning policy, but this was neither attributed specifically to causal misidentification, nor tackled using causally motivated approaches.

\begin{wraptable}{r}{0.5\textwidth}
  \small
  \vspace{-0.1in}
    \centering
    \resizebox{0.5\textwidth}{!}{
    \begin{tabular}{l|c|ccc}
        \toprule
    Metrics $\rightarrow$      & Validation   & \multicolumn{3}{c}{Driving Performance} \\
    Methods $\downarrow$       & Perplexity  & Distance & Interventions & Collisions \\  \midrule
\method{history}      & \textbf{0.834} &  144.92 & 2.94 $\pm$ 1.79 & 6.49 $\pm$ 5.72\\
\method{no-history}  & 0.989 & \textbf{268.95} & \textbf{1.30 $\pm$ 0.78} & \textbf{3.38 $\pm$ 2.55} \\ \bottomrule
\end{tabular}
}
\caption{\small{Imitation learning results from~\citet{Wang2019-us}}. Accessing history yields better validation performance, but worse actual driving performance.}
\label{tab:dequan}
\end{wraptable}
We draw the reader's attention to particularly telling results from~\citet{Wang2019-us} for learning to drive in near-photorealistic GTA-V~\cite{krahenbuhl2018free} environments, using behavior cloning with DAgger-inspired expert perturbation. Imitation learning policies are trained using overhead image observations with and without ``history'' information (\method{history} and \method{no-history}) about the ego-position trajectory of the car in the past. 

Similar to our tests above, architectures are identical for the two methods. And once again, like in our tests above, \method{history} has better performance on held-out demonstration data, but much worse performance when actually deployed.
Tab~\ref{tab:dequan} shows these results, reproduced from~\citet{Wang2019-us} Table II. These results constitute strong evidence for the prevalence of causal misidentification in realistic imitation learning settings. \citet{chauffeurnet} also observe similar symptoms in a driving setting, and present a dropout~\cite{dropout} approach to tackle it, which we compare to in our experiments. Subsequent to an earlier version of this work, \citet{codevilla2019exploring} also verify causal confusion in realistic driving settings, and propose measures to address a specific instance of causal confusion. %

\begin{figure*}[t]
    \centering
    \begin{subfigure}[b]{0.32\textwidth}
        \includegraphics[width=\textwidth]{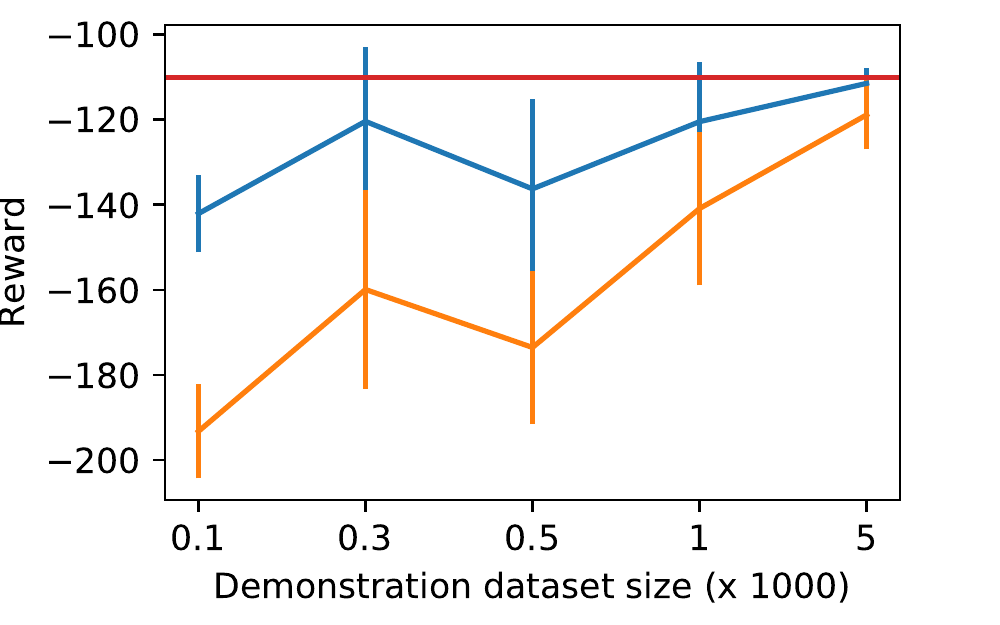}
        \caption{MountainCar}
    \end{subfigure}
    \begin{subfigure}[b]{0.32\textwidth}
        \includegraphics[width=\textwidth]{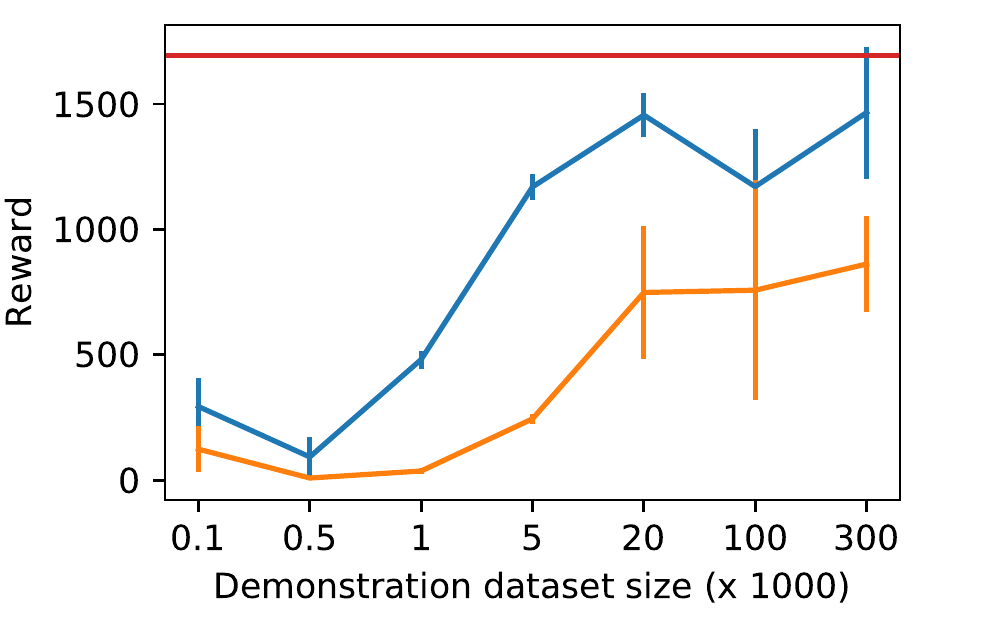}
        \caption{Hopper}
    \end{subfigure}
    \begin{subfigure}[b]{0.32\textwidth}
        \includegraphics[width=\textwidth]{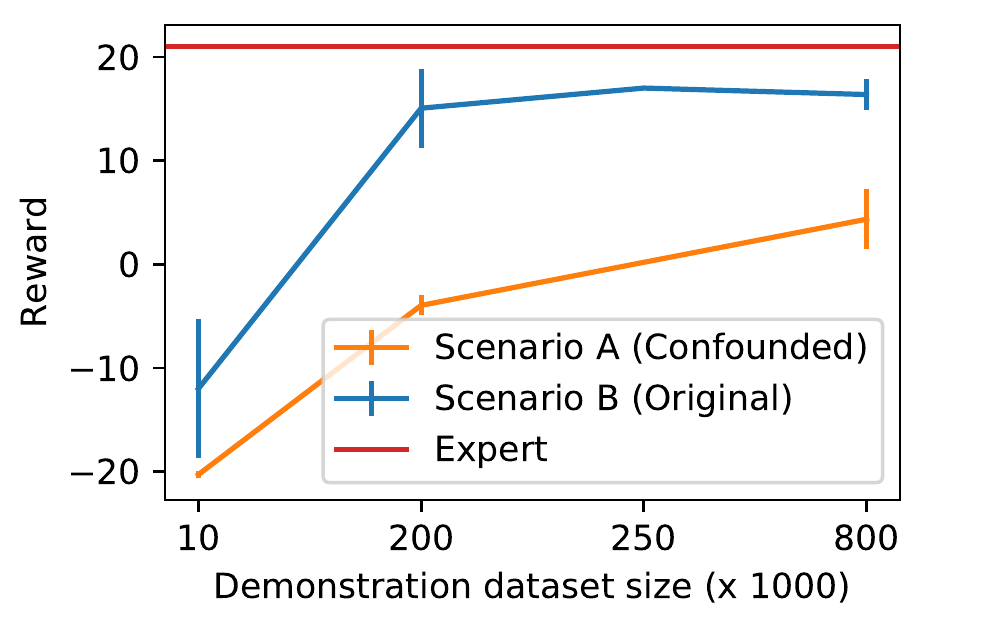}
        \caption{Pong}
    \end{subfigure}
    \caption{\small Diagnosing causal misidentification: net reward (y-axis) vs number of training samples (x-axis) for \method{original} and \method{confounded}, compared to expert reward (mean and stdev over 5 runs). Also see Appendix~\ref{app:diagnosis}.}    \label{fig:diagnosis}
    \vspace{-0.15in}
\end{figure*}

\vspace{-0.1in}
\section{Resolving Causal Misidentification}\label{sec:solution_framework}

Recall from Sec~\ref{sec:causal_dist_shift} that robustness to causal misidentification can be achieved by finding the true causal model of the expert's actions. We propose a simple pipeline to do this. First, we jointly learn policies corresponding to various causal graphs (Sec~\ref{sec:policies}). Then, we perform targeted interventions to efficiently search over the hypothesis set for the correct causal model (Sec~\ref{sec:intervention}).

\subsection{Causal Graph-Parameterized Policy Learning}\label{sec:policies}

 \begin{wrapfigure}{l}{0.3\textwidth}
   \vspace{-0.2in}
     \centering
     \includegraphics[width=0.3\textwidth]{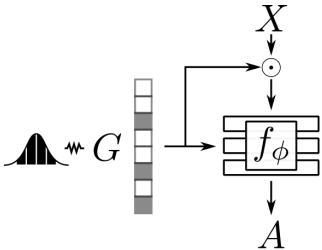}
     \caption{\small Graph-parameterized policy.} %
     \label{fig:arch}
 \end{wrapfigure}
In this step, we learn a policy corresponding to each candidate causal graph. Recall from Sec~\ref{sec:problem} that the expert's actions $A$ are based on an unknown subset of the state variables $\{X_i\}_{i=1}^n$. Each $X_i$ may either be a cause or not, so there are $2^n$ possible graphs.
We parameterize the structure $G$ of the causal graph as a vector of $n$ binary variables, each indicating the presence of an arrow from $X_k$ to $A$ in Fig~\ref{fig:graph}.
We then train a single graph-parameterized policy $\pi_G(X) = f_\phi([X \odot G, G])$, where $\odot$ is element-wise multiplication, and $[\cdot,\cdot]$ denotes concatenation. $\phi$ are neural network parameters, trained through gradient descent to minimize:
\begin{align}
    \mathbb{E}_G [\ell(f_\phi([X_i \odot G, G]), A_i)],
    \label{eq:main_loss}
\end{align}
where $G$ is drawn uniformly at random over all $2^n$ graphs and $\ell$ is a mean squared error loss for the continuous action environments and a cross-entropy loss for the discrete action environments. Fig~\ref{fig:arch} shows a schematic of the training time architecture. The policy network $f_\phi$ mapping observations $X$ to actions $A$ represents a mixture of policies, one corresponding to each value of the binary causal graph structure variable $G$, which is sampled as a bernoulli random vector.

In Appendix~\ref{app:variational-causal-discovery}, we propose an approach to perform variational Bayesian causal discovery over graphs $G$, using a latent variable model to infer a distribution over functional causal models (graphs and associated parameters)---the modes of this distribution are the FCMs most consistent with the demonstration data. This resembles the scheme above, except that instead of uniform sampling, graphs are sampled preferentially from FCMs that fit the training demonstrations well. We compare both approaches in Sec~\ref{sec:exp}, finding that simple uniform sampling nearly always suffices in preparation for the next step: targeted intervention.

\vspace{-0.1in}
\subsection{Targeted Intervention}\label{sec:intervention}
Having learned the graph-parameterized policy as in Sec~\ref{sec:policies}, we propose targeted intervention to compute the likelihood $\mathcal{L}(G)$ of each causal graph structure hypothesis $G$. In a sense, imitation learning provides an ideal setting for studying interventional causal learning: causal misidentification presents a clear challenge, while the fact that the problem is situated in a sequential decision process where the agent can interact with the world provides a natural mechanism for carrying out limited interventions.

We propose two intervention modes, both of which can be carried out by interaction with the environment via the actions:
\begin{wrapfigure}[29]{r}{0.51\textwidth}
\vspace{-0.15in}
\scalebox{0.9}{
\begin{minipage}{0.566\textwidth}
\begin{algorithm}[H]
  \caption{Expert query intervention}
  \label{alg:intervention-agreement}
\begin{algorithmic}
  \STATE {\bfseries Input:} policy network $f_\phi$ s.t.~$\pi_G(X)=f_\phi([X\odot G, G])$
  \STATE Initialize $w = 0, \mathcal{D} = \emptyset$.
  \STATE Collect states $\mathcal{S}$ by executing $\pi_{mix}$, the mixture of policies $\pi_G$ for uniform samples $G$.
  \STATE For each $X$ in $S$, compute disagreement score: \\ ~\phantom . \phantom . \phantom . $D(X)=\mathbb{E}_G[D_{KL}(\pi_G(X), \pi_{mix}(X))]$
  \STATE Select $\mathcal{S}' \subset \mathcal{S}$ with maximal $D(X)$.
  \STATE Collect state-action pairs $\mathcal{T}$ by querying expert on $\mathcal{S}'$.
  \FOR{$i=1 \dots N$}
  \STATE Sample $G \sim p(G) \propto \exp \langle w, G \rangle$.
  \STATE $\mathcal{L} \leftarrow \mathbb{E}_{s,a \sim \mathcal{T}}[\ell(\pi_G(s), a)]$
  \STATE $\mathcal{D} \leftarrow \mathcal{D} \cup \{(G, \mathcal{L}) \}$
  \STATE Fit $w$ on $\mathcal{D}$ with linear regression.
  \ENDFOR
  \STATE {\bfseries Return:} $\argmax_G p(G)$
\end{algorithmic}
\end{algorithm}
\vspace{-0.2in}
 \begin{algorithm}[H]
   \caption{Policy execution intervention}
   \label{alg:intervention-policy}
\begin{algorithmic}
   \STATE {\bfseries Input:} policy network $f_\phi$ s.t.~$\pi_G(X)=f_\phi([X\odot G, G])$
   \STATE Initialize $w = 0, \mathcal{D} = \emptyset$.
   \FOR{$i=1 \dots N$}
   \STATE Sample $G \sim p(G) \propto \exp \langle w, G \rangle$.
   \STATE Collect episode return $R_G$ by executing $\pi_G$.
   \STATE $\mathcal{D} \leftarrow \mathcal{D} \cup \{(G, R_G) \}$
   \STATE Fit $w$ on $\mathcal{D}$ with linear regression.
   \ENDFOR
   \STATE {\bfseries Return:} $\argmax_G p(G)$
\end{algorithmic}
\end{algorithm}
\vspace{-0.2in}
\end{minipage}
}
\end{wrapfigure}
\noindent \textbf{Expert query mode.}~~~This is the standard intervention approach applied to imitation learning: intervene on $X^t$ to assign it a value, and observe the expert response $A$. 
To do this, we sample a graph $G$ at the beginning of each intervention episode and execute the policy $\pi_G$. Once data is collected in this manner, we elicit expert labels on interesting states. %
    This requires an interactive expert, as in DAgger~\cite{ross2010efficient}, but requires substantially fewer expert queries than DAgger, because: (i) the queries serve only to disambiguate among a relatively small set of valid FCMs, and (ii) we use disagreement among the mixture of policies in $f_\phi$ to query the expert efficiently in an active learning approach. We summarize this approach in Algorithm~\ref{alg:intervention-agreement}.

    \noindent \textbf{Policy execution mode.}~~~It is not always possible to query an expert. For example, for a learner learning to drive a car by watching a human driver, it may not be possible to put the human driver into dangerous scenarios that the learner might
    encounter at intermediate stages of training. In cases like these where we would like to learn from
    pre-recorded demonstrations alone, we propose to intervene indirectly by using environmental returns (sum of rewards over time in an episode)
    $R=\sum_t r_t$. The policies $\pi_G(\cdot)=f_\phi([\cdot \odot G, G])$ corresponding to different hypotheses $G$ are executed in the environment and the returns $R_G$ collected. The likelihood of each graph is proportional to the exponentiated returns $\exp R_G$. The intuition is simple: environmental returns contain information about optimal expert policies even when experts are not queryable. Note that we do not even assume access to per-timestep rewards as in standard reinforcement learning; just the \emph{sum} of rewards for each completed run.
    As such, this intervention mode is much more flexible. See Algorithm~\ref{alg:intervention-policy}.

Note that both of the above intervention approaches evaluate individual hypotheses in isolation, but the number of hypotheses grows exponentially in the number of state variables. To handle larger states, we infer a graph distribution $p(G)$, by assuming an energy based model with a linear energy $E(G)=\langle w, G\rangle +b$, so the graph distribution is $p(G)=\prod_i p(G_i)=\prod_i \text{Bernoulli}(G_i|\sigma(w_i/\tau))$, where $\sigma$ is the sigmoid, which factorizes in independent factors. The independence assumption is sensible as our approach collapses $p(G)$ to its mode before returning it and the collapsed distribution is always independent.
$E(G)$ is inferred from linear regression on the likelihoods. This process is depicted in Algorithms~\ref{alg:intervention-agreement} and~\ref{alg:intervention-policy}.
The above method can be formalized within the reinforcement learning framework~\cite{levine2018probrl}. As we show in Appendix~\ref{app:soft-q}, the energy-based model can be seen as an instance of soft Q-learning~\cite{haarnoja2017reinforcement}.

\vspace{-0.1 in}
\subsection{Disentangling Observations}~\label{sec:vae}
In the above, we have assumed access to disentangled observations $X^t$. When this is not the case, such as with image observations, $X^t$ must be set to a disentangled representation of the observation at time $t$. We construct such a representation by training a $\beta$-VAE~\cite{kingma2013auto,higgins2016beta} to reconstruct the original observations. %
To capture states beyond those encountered by the expert, we train with a mix of expert and random trajectory states. Once trained, $X^t$ is set to be the mean of the latent distribution produced at the output of the encoder. The VAE training objective encourages disentangled dimensions in the latent space~\cite{burgess2018understanding,chen2018isolating}. We employ CoordConv~\cite{coordconv} in both the encoder and the decoder architectures.

\vspace{-0.1in}
\section{Experiments}\label{sec:exp}
\vspace{-0.1in}

We now evaluate the solution described in Sec~\ref{sec:solution_framework} on the five tasks (MountainCar, Hopper, and 3 Atari games) described in Sec~\ref{sec:benchmark}. In particular, recall that \method{confounded} performed significantly worse than \method{original} across all tasks. In our experiments, we seek to answer the following questions: \textbf{(1)} Does our targeted intervention-based solution to causal misidentification bridge the gap between \method{confounded} and \method{original}? \textbf{(2)} How quickly does performance improve with intervention? \textbf{(3)} Do both intervention modes (expert query, policy execution) described in Sec~\ref{sec:intervention} resolve causal misidentification? \textbf{(4)} Does our approach in fact recover the true causal graph? \textbf{(5)} Are disentangled state representations necessary?

In each of the two intervention modes,
we compare two variants of our method: \method{unif-intervention} and \method{disc-intervention}. They only differ in the training of the graph-parameterized mixture-of-policies $f_\phi$---while \method{unif-intervention} samples causal graphs uniformly, %
\method{disc-intervention} uses the variational causal discovery approach mentioned in Sec~\ref{sec:policies}, and described in detail in Appendix~\ref{app:variational-causal-discovery}.

\begin{figure}[t]
    \centering
    \includegraphics[trim={0.2cm 0.2cm 0.0cm 0.0cm}, clip,height=8em]{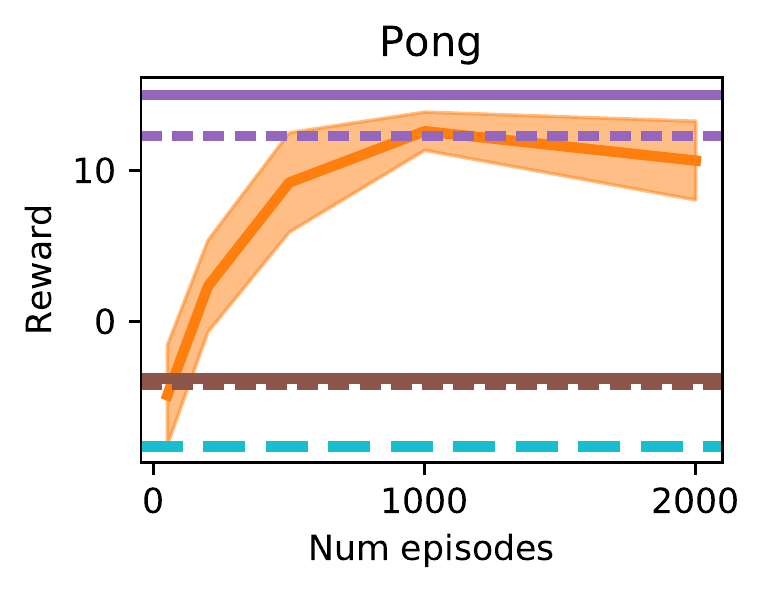}
    \includegraphics[trim={0.2cm 0.2cm 0.0cm 0.0cm}, clip,height=8em]{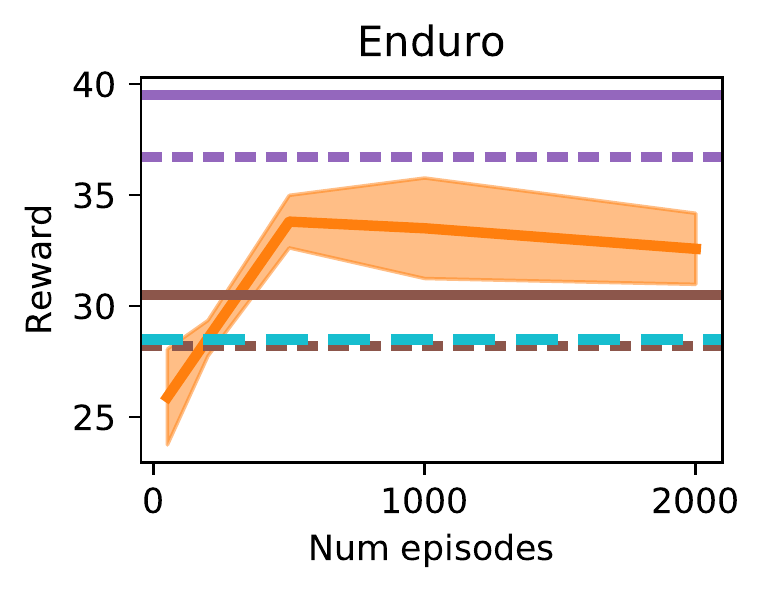}
    \includegraphics[trim={0.2cm 0.2cm 0.0cm 0.0cm}, clip,height=8em]{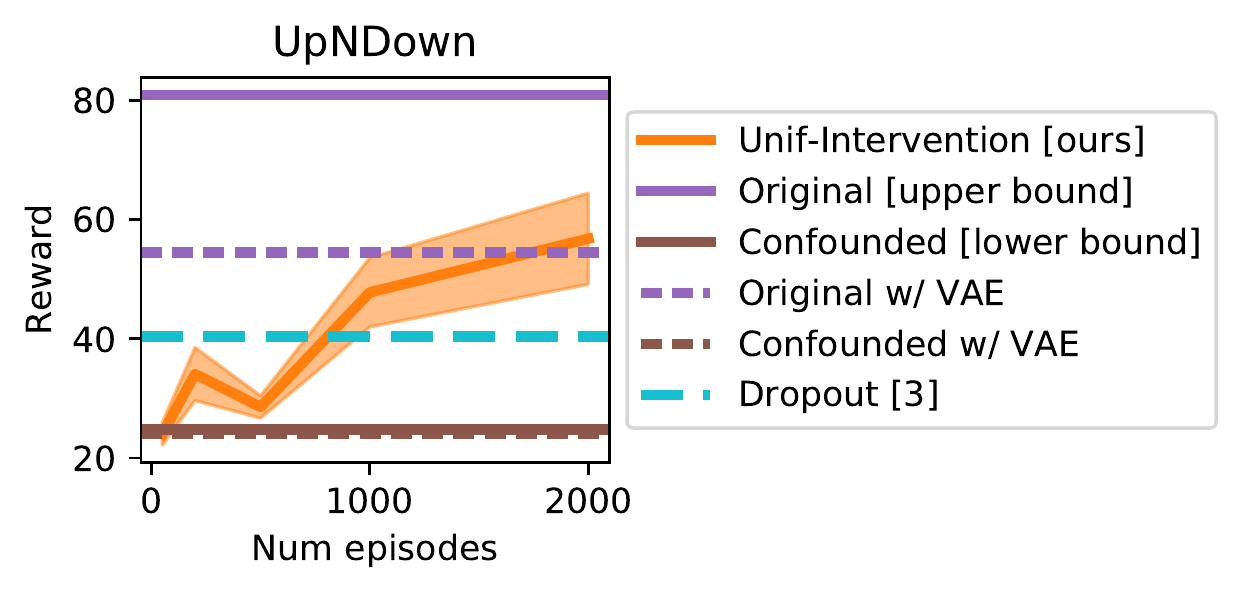}
    \caption{\small Reward vs.~number of intervention episodes (policy execution interventions) on Atari games. \method{unif-intervention} succeeds in getting rewards close to \method{original w/ vae}, while the \method{dropout} baseline only outperforms \method{confounded w/ vae} in UpNDown.}
    \label{fig:atari-rl}
    \vspace{-0.2in}
\end{figure}
\noindent \textbf{Baselines.}~~~We compare our method against three baselines applied to the confounded state. \method{dropout} trains the policy using Eq~\ref{eq:main_loss} and evaluates with the graph $G$ containing all ones, which amounts to dropout regularization~\cite{dropout} during training, as proposed by~\citet{chauffeurnet}.  \method{dagger}~\cite{ross2010efficient} addresses distributional shift by querying the expert on states encountered by the imitator, requiring an interactive expert. We compare \method{dagger} to our expert query intervention approach. Lastly, we compare to Generative Adversarial Imitation Learning (\method{gail})~\cite{ho2016generative}. \method{gail} is an alternative to standard behavioral cloning that works by matching demonstration trajectories to those generated by the imitator during roll-outs in the environment. Note that the PC algorithm~\cite{le2016fast}, commonly used in causal discovery from passive observational data, relies on the faithfulness assumption, which causes it to be infeasible in our setting, as explained in Appendix~\ref{app:pc}. See Appendices \ref{app:prop} \& \ref{app:variational-causal-discovery} for details.%

\begin{figure}[b]
    \vspace{-0.25in}
    \centering
    \includegraphics[trim={.2cm 0.3cm 1cm 0.0cm}, clip,height=10em]{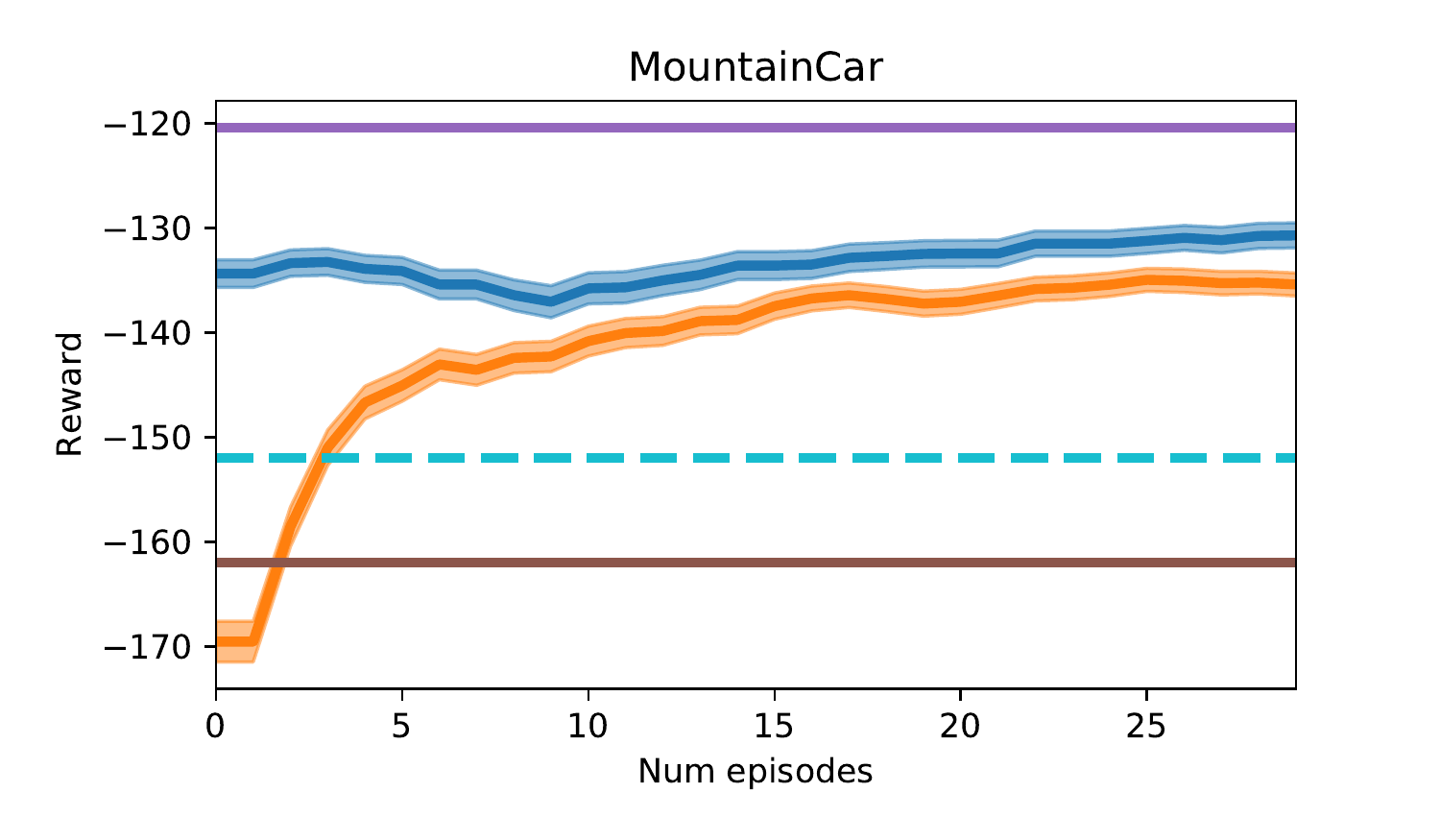}
    \includegraphics[trim={.2cm 0.3cm 0cm 0.0cm}, clip,height=10em]{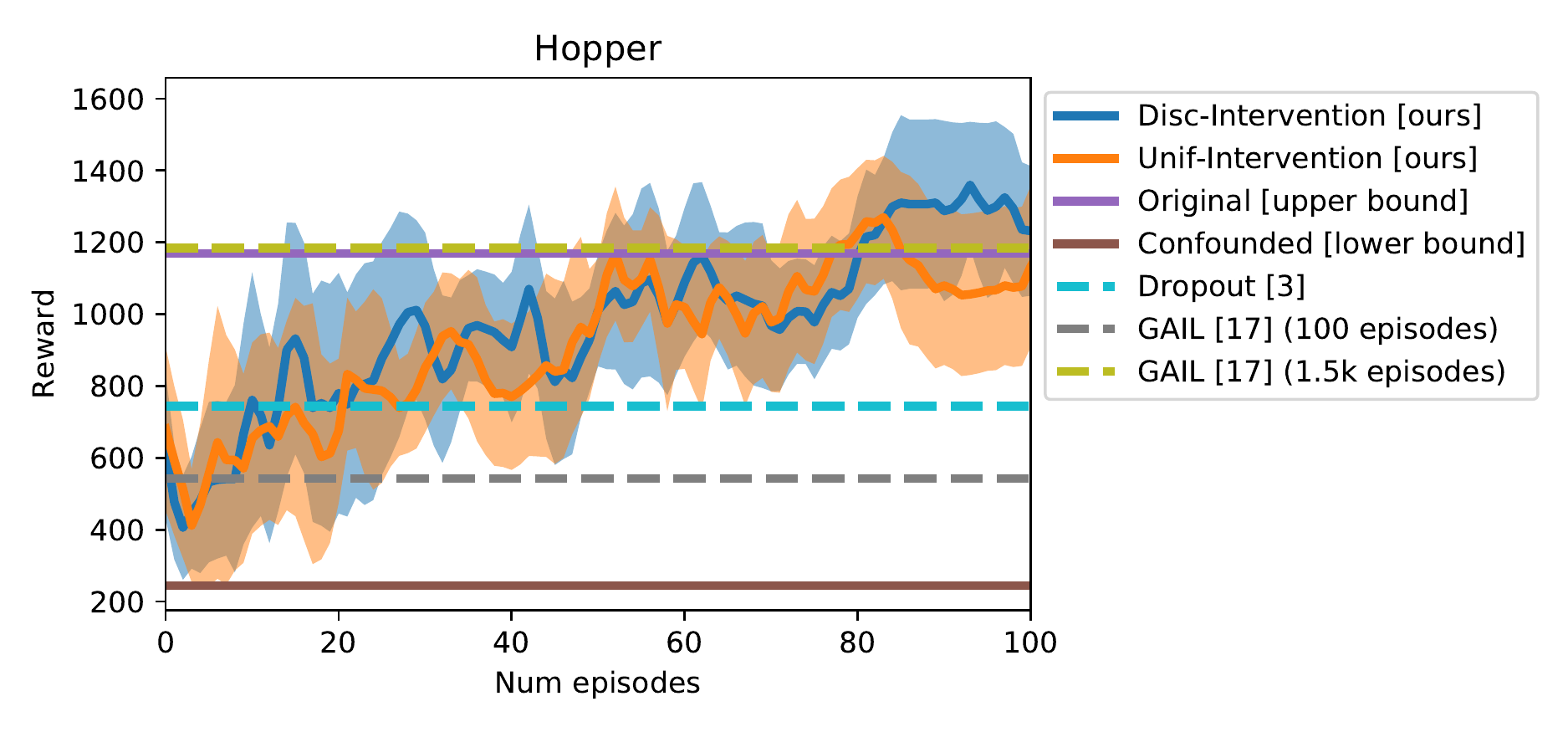}
    \caption{\small Reward vs.~number of intervention episodes (policy execution interventions) on MountainCar and Hopper. Our methods \method{unif-intervention} and \method{disc-intervention} bridge most of the causal misidentification gap (between \method{original} (lower bound) and \method{confounded} (upper bound), approaching \method{original} performance after tens of episodes. \method{gail}~\cite{ho2016generative} (on Hopper) achieves this too, but after 1.5k episodes.}
    \label{fig:rollout_intervention_posterior}
\end{figure}

\noindent \textbf{Intervention by policy execution.}~~~Fig~\ref{fig:rollout_intervention_posterior} plots episode rewards versus number of policy execution intervention episodes for MountainCar and Hopper. The reward always corresponds to the current mode $\argmax_G p(G)$ of the posterior distribution over graphs, updated after each episode, as described in Algorithm~\ref{alg:intervention-policy}. In these cases, both \method{unif-intervention} and \method{disc-intervention} eventually converge to models yielding similar rewards, which we verified to be the correct causal model i.e., true causes are selected and nuisance correlates left out. %
In early episodes on MountainCar, \method{disc-intervention} benefits from the prior over graphs inferred in the variational causal discovery phase. However, in Hopper, the simpler \method{unif-intervention} performs just as well. \method{dropout} does indeed help in both settings, as reported in~\citet{chauffeurnet}, but is significantly poorer than our approach variants. \method{gail} requires about 1.5k episodes on Hopper to match the performance of our approaches, which only need tens of episodes. Appendix~\ref{app:gail} further analyzes the performance of \method{gail}. Standard implementations of \method{gail} do not handle discrete action spaces, so we do not evaluate it on MountainCar.

\begin{figure}%
    \centering
    \includegraphics[trim={0.2cm 0.3cm 1cm 0.0cm}, clip,height=10em]{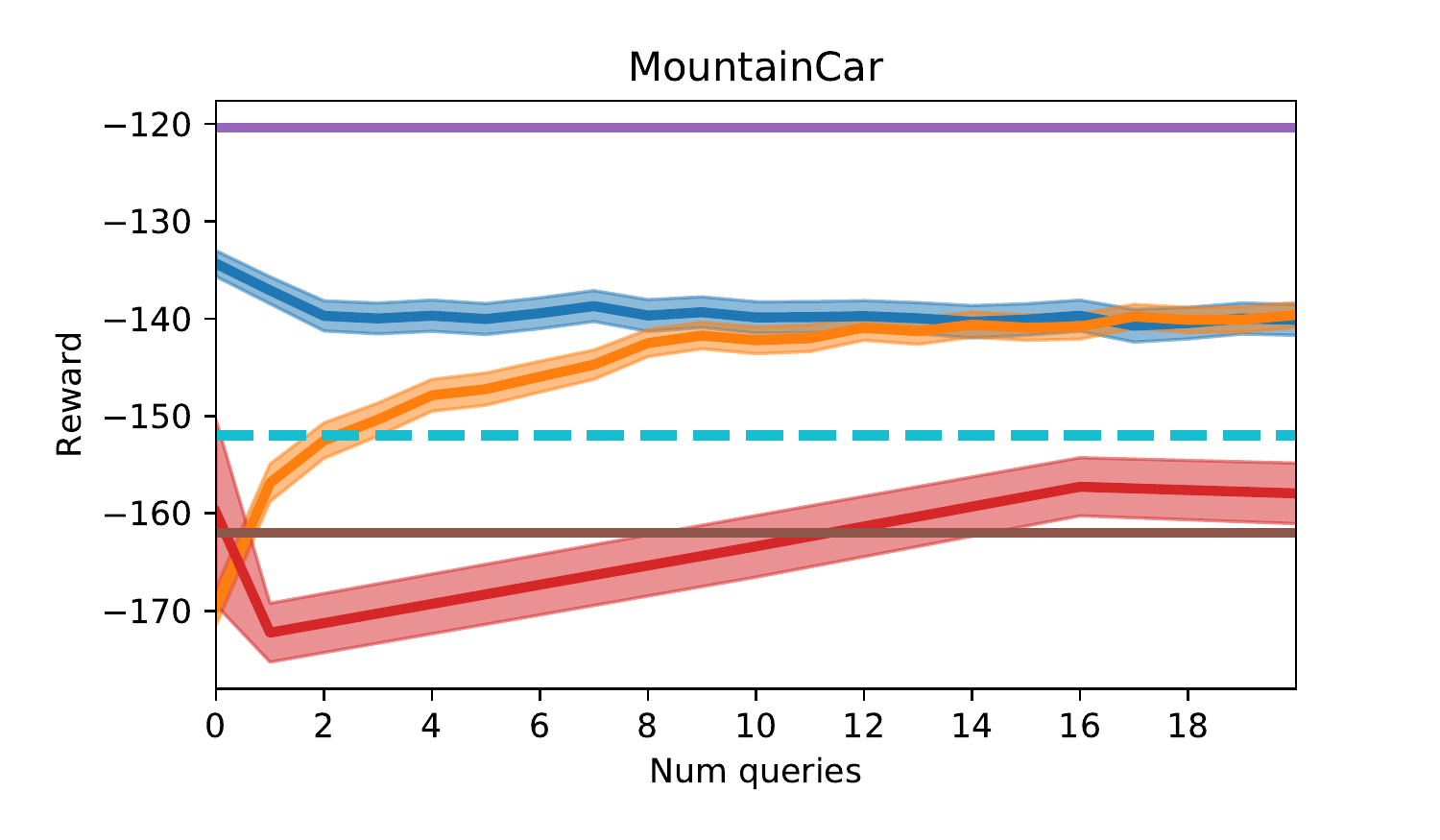}
    \includegraphics[trim={0.2cm 0.3cm 0cm 0.0cm}, clip,height=10em]{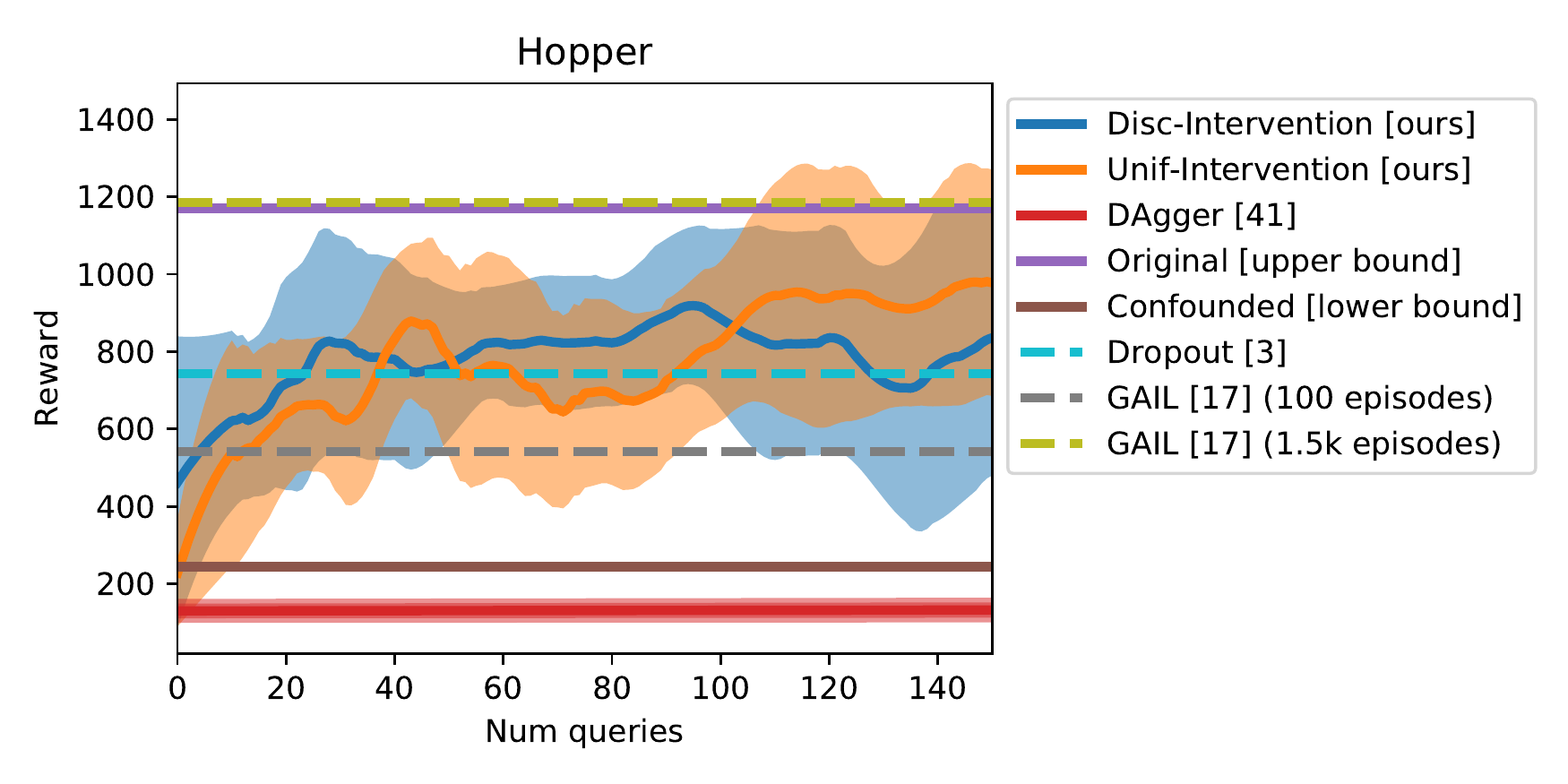}
    \caption{\small Reward vs.~expert queries (expert query interventions) on MountainCar and Hopper. Our methods partially bridge the gap from \method{confounded} (lower bd) to \method{original} (upper bd), also outperforming \method{dagger}~\cite{ross2011reduction} and \method{dropout}~\cite{chauffeurnet}. \method{gail}~\cite{ho2016generative} outperforms our methods on Hopper, but requires a large number of policy roll-outs (also see Fig~\ref{fig:rollout_intervention_posterior} comparing \method{gail} to our policy execution-based approach).} %
    \label{fig:expert_intervention_posterior}
    \vspace{-0.2in}
\end{figure}

As described in Sec~\ref{sec:vae}, we use a VAE to disentangle image states in Atari games to produce 30-D representations for Pong and Enduro and 50-D representations for UpNDown. We set this dimensionality heuristically to be as small as possible, while still producing good reconstructions as assessed visually. Requiring the policy to utilize the VAE representation without end-to-end training does result in some drop in performance, as seen in Fig~\ref{fig:atari-rl}. However, causal misidentification still causes a very large drop of performance even relative to the baseline VAE performance. \method{disc-intervention} is hard to train as the cardinality of the state increases, and yields only minor advantages on Hopper (14-D states), so we omit it for these Atari experiments.
As Fig~\ref{fig:atari-rl} shows, \method{unif-intervention} indeed  improves significantly over \method{confounded w/ vae} in all three cases, matching \method{original w/ vae} on Pong and UpNDown, while the \method{dropout} baseline only improves UpNDown. In our experiments thus far, \method{gail} fails to converge to above-chance performance on any of the Atari environments. These results show that our method successfully alleviates causal misidentification within relatively few trials.

\noindent \textbf{Intervention by expert queries.}~~~Next, we perform direct intervention by querying the expert on samples from trajectories produced by the different causal graphs. %
In this setting, we can also directly compare to \method{dagger}~\cite{ross2011reduction}.
Fig~\ref{fig:expert_intervention_posterior} shows results on MountainCar and Hopper. Both our approaches successfully improve over \method{confounded} within a small number of queries. Consistent with policy execution intervention results reported above, we verify that our approach again identifies the true causal model correctly in both tasks, and also performs better than \method{dropout} in both settings. It also exceeds the rewards achieved by \method{dagger}, while using far fewer expert queries. In Appendix~\ref{app:dagger}, we show that \method{dagger} requires hundreds of queries to achieve similar rewards for MountainCar and tens of thousands for Hopper. Finally, \method{gail} with 1.5k episodes outperforms our expert query interventions approach. Recall however from Fig~\ref{fig:expert_intervention_posterior} that this is an order of magnitude more than the number of episodes required by our policy intervention approach. %

Once again, \method{disc-intervention} only helps in early interventions on MountainCar, and not at all on Hopper. Thus, our method's performance is primarily attributable to the targeted intervention stage, and the exact choice of approach used to learn the mixture of policies is relatively insignificant. %

Overall, of the two intervention approaches, policy execution converges to better final rewards. Indeed, for the Atari environments, we observed that expert query interventions proved ineffective. We believe this is because expert agreement is an imperfect proxy for true environmental rewards. %
\textbf{Interpreting the learned causal graph.}
Our method labels each dimension of the VAE encoding of the frame as a cause or nuisance variable. In Fig~\ref{fig:attention}, we analyze these inferences in the Pong environment as follows: in the top row, a frame is encoded into the VAE latent, then for all nuisance dimensions (as inferred by our approach \method{unif-intervention}), that dimension is replaced with a sample from the prior, and new samples are generated. In the bottom row, the same procedure is applied with a random graph that has as many nuisance variables as the inferred graph.
We observe that in the top row, the causal variables (the ball and paddles) are shared between the samples, while the nuisance variables (the digit) differ, being replaced either with random digits or unreadable digits. In the bottom row, the causal variables differ strongly, indicating that important aspects of the state are judged as nuisance variables. This validates that, consistent with MountainCar and Hopper, our approach does indeed identify true causes in Pong.
\begin{figure}[tb]
    \centering
    \begin{subfigure}[b]{0.7\textwidth}
        \includegraphics[width=\textwidth]{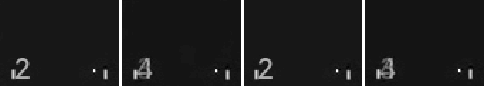}
    \end{subfigure}\\
    \vspace{0.05in}
    \begin{subfigure}[b]{0.7\textwidth}
        \includegraphics[width=\textwidth]{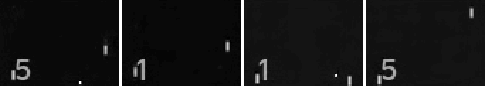}
    \end{subfigure}
    \caption{\small
    Samples from (top row) learned causal graph and (bottom row) random causal graph. (See text)
    }
    \label{fig:attention}
\end{figure}

\begin{wraptable}[8]{r}{0.5\textwidth}
    \vspace{-0.1in}
    \small
    \centering
    \begin{tabular}{ll|c}
     \toprule
        Mode & Representation & Reward  \\ \midrule
        Policy execution & Disentangled & \textbf{-137} \\
         & Entangled & -145 \\\midrule
        Expert queries & Disentangled & \textbf{-140} \\
         & Entangled & -165 \\
         
        \bottomrule
    \end{tabular}
    \caption{\small Intervention on (dis)entangled MountainCar.}
    \label{tab:entangled}
  
\end{wraptable}
\noindent \textbf{Necessity of disentanglement.}~~~
Our intervention method assumes a disentangled representation of state. Otherwise, each of the $n$ individual dimensions in the state might capture both causes as well as nuisance variables and the problem of discovering true causes is no longer reducible to searching over $2^n$ graphs.

To test this empirically, we create a variant of our MountainCar \method{confounded} testbed, where the 3-D past action-augmented state vector is rotated by a fixed, random rotation.
After training the graph-conditioned policies on the entangled and disentangled \method{confounded} state, and applying 30 episodes of policy execution intervention or 20 expert queries, we get the results shown in Tab~\ref{tab:entangled}. The results are significantly lower in the entangled than in the disentangled (non-rotated) setting, indicating disentanglement is important for the effectiveness of our approach.

\section{Conclusions}

We have identified a naturally occurring and fundamental problem in imitation learning, ``causal misidentification'', and proposed a causally motivated approach for resolving it. 
While we observe evidence for causal misidentification arising in natural imitation learning settings, we have thus far validated our solution in somewhat simpler synthetic settings intended to mimic them. %
Extending our solution to work for such realistic scenarios is an exciting direction for future work. Finally, %
apart from imitation, general machine learning systems deployed in the real world also encounter ``feedback''~\cite{ sculley2014machine, feedback_video}, which opens the door to causal misidentification. We hope to address these more general settings in the future.

\paragraph{Acknowledgments:} We would like to thank Karthikeyan Shanmugam and Shane Gu for pointers to prior work early in the project, and Yang Gao, Abhishek Gupta, Marvin Zhang, Alyosha Efros, and Roberto Calandra for helpful discussions in various stages of the project. We are also grateful to Drew Bagnell and Katerina Fragkiadaki for helpful feedback on an earlier draft of this paper. This project was supported in part by Berkeley DeepDrive, NVIDIA, and Google.

\bibliography{references}
\bibliographystyle{plainnat}
\clearpage

\appendix
\section{Expert Demonstrations}\label{app:demonstrations}
To collect demonstrations, we first train an expert with reinforcement learning. We use DQN \cite{dqn} for MountainCar,
TRPO \cite{schulman2015trust} for Hopper, and PPO~\cite{schulman2017proximal} for the Atari environments (Pong, UpNDown, Enduro). This expert policy is executed in the environment to collect demonstrations.

\section{Necessity of Correct Causal Model}\label{app:prop}

\textbf{Faithfulness:} A causal model is said to be faithful when all conditional independence relationships in the distribution are represented in the graph.

We pick up the notation used in Sec~\ref{sec:causal_dist_shift}, but for notational simplicity, we drop the time superscript for $X$, $A$, and $Z$ when we are not reasoning about multiple time-steps.

\begin{prop}
Let the expert's functional causal model be $(G^*, \theta_{G^*}^*)$, with causal graph $G^*\in \mathcal{G}$ as in Figure \ref{fig:graph} and function parameters $\theta_{G^*}^*$.
We assume some faithful learner $(\hat G, \theta_{\hat G}), \hat G\in \mathcal{G}$ that agrees on the interventional query:
$$\forall X, A: p_{G^*, \theta_{G^*}^*}(A|do(X)) = p_{\hat G, \theta_{\hat G}}(A|do(X))$$
Then it must be that $G^* = \hat G$.\footnote{We drop time $t$ from the superscript when discussing states and actions from the same time.}
\end{prop}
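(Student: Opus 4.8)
The plan is to reason directly about the structure of the graph $G$ in Figure~\ref{fig:graph} and what the interventional distribution $p(A \mid do(X))$ reveals. The key observation is that the causal graph $G$ in our setting has a very restricted structure: the only edges whose presence is in question are the arrows from each state dimension $X_k$ to the action $A$; all other structure (the confounder $Z = [X^{t-1}, A^{t-1}]$ feeding into the $X_k$'s, the noise variables $E_i$) is fixed. So a graph $G \in \mathcal{G}$ is entirely determined by the subset $\mathrm{Pa}(A; G) \subseteq \{1, \dots, n\}$ of state dimensions that point into $A$. Hence proving $G^* = \hat G$ reduces to proving $\mathrm{Pa}(A; G^*) = \mathrm{Pa}(A; \hat G)$.

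**The core argument.**
First I would write out what the intervention $do(X)$ does: it severs all incoming edges to $X$ (i.e. it cuts the confounder $Z \to X_k$ links), so under $do(X = x)$ the action is generated by $A = f_A(x_{\mathrm{Pa}(A;G)}, E_A; \theta_G)$, and therefore $p_{G,\theta_G}(A \mid do(X = x))$ depends on $x$ only through the coordinates in $\mathrm{Pa}(A; G)$. Now suppose for contradiction that $\mathrm{Pa}(A; G^*) \neq \mathrm{Pa}(A; \hat G)$; without loss of generality pick an index $k$ that lies in one set but not the other, say $k \in \mathrm{Pa}(A; G^*) \setminus \mathrm{Pa}(A; \hat G)$. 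Then $p_{\hat G, \theta_{\hat G}}(A \mid do(X = x))$ is constant in $x_k$ (holding the other coordinates fixed), whereas I claim $p_{G^*, \theta_{G^*}^*}(A \mid do(X = x))$ is genuinely sensitive to $x_k$. The latter is exactly where the faithfulness assumption enters: since $k$ is a true parent of $A$ in $G^*$, the edge $X_k \to A$ is present, and faithfulness forbids the functional dependence from being "cancelled out" into a spurious conditional independence — so $A \not\!\perp X_k \mid X_{\mathrm{Pa}(A;G^*)\setminus\{k\}}$ in the interventional (mutilated-graph) distribution. This contradicts the hypothesized agreement $p_{G^*,\theta_{G^*}^*}(A\mid do(X)) = p_{\hat G,\theta_{\hat G}}(A\mid do(X))$ for all $X$. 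Running the symmetric argument for $k \in \mathrm{Pa}(A; \hat G) \setminus \mathrm{Pa}(A; G^*)$ — using faithfulness of the learner $(\hat G, \theta_{\hat G})$ — rules out the other direction, so the parent sets coincide and thus $G^* = \hat G$.

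**Where the difficulty lies.**
The main obstacle is making the faithfulness step fully rigorous: I need to be careful that "faithful" is being applied to the right distribution — namely the distribution induced on $(X, A)$ by the mutilated graph $G_{\bar X}$ under the intervention, not the observational distribution — and that in this mutilated graph the relevant (non-)edge genuinely corresponds to a (non-)independence. In the mutilated graph $X$ is a root (its incoming edges are cut) and the only path from $X_k$ to $A$ is the direct edge, so the graphical side is clean: $X_k \perp A$ in $G_{\bar X}$ iff $k \notin \mathrm{Pa}(A; G)$. Faithfulness of $(G^*, \theta^*_{G^*})$ then says this graphical independence holds iff the corresponding probabilistic independence holds, which is precisely the bridge I need. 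A secondary subtlety is the "without loss of generality" symmetrization — I should state explicitly that faithfulness is assumed for the learner's FCM as well (the proposition's phrasing "faithful learner" covers this), since otherwise I could only rule out one of the two inclusions. Modulo these care points, the argument is short and does not require any real computation.
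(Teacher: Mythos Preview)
Your proposal is correct and matches the paper's proof: both observe that in the mutilated graph $G_{\bar X}$ the only path from $X_i$ to $A$ is the direct edge, so faithfulness converts agreement on $p(A\mid do(X))$ into equality of parent sets (the paper packages this via the index set $I_G=\{i:(X_i\to A)\notin G\}$ rather than a contradiction argument, but the substance is identical). One small slip in your closing paragraph: the faithfulness you should be worried about for the symmetrization is that of the \emph{expert} $(G^*,\theta_{G^*}^*)$, not the learner --- the proposition already grants a ``faithful learner,'' but your first direction ($k\in\mathrm{Pa}(A;G^*)\setminus\mathrm{Pa}(A;\hat G)$) invokes faithfulness of $G^*$, which the paper, like you, leaves implicit.
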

\begin{proof}
For graph $G$, define the index set of state variables that are independent of the action in the mutilated graph $G_{\bar{X}}$:
$$I_G = \{i | X_i \underset{G_{\bar{X}}}{\independent}  A\}$$
From the assumption of matching interventional queries and the assumption of faithfulness, it follows that: $I_{G^*}=I_{\hat G}$.
From the graph, we observe that $I_G = \{i | (X_i \to A) \not\in G \}$ and thus $G^*=\hat G$.
\end{proof}

\section{Passive Causal Discovery, Faithfulness and Determinism}\label{app:pc}
\begin{wraptable}{r}{0.5\textwidth}
    \vspace{-0.1in}
    \small
    \centering
    \begin{tabular}{l|cc}
     \toprule
        & $I(X^t_i ; A^t)$ & $I(X^t_i ; A^t | Z^t)$  \\ \midrule
        $X^t_0$ (cause) & 0.377 &   0.013 \\
        $X^t_1$ (cause) & 0.707 &   0.019 \\
        $X^t_2$ (nuisance) & 0.654 &   0.027 \\
        \bottomrule
    \end{tabular}
    \caption{\small Mutual information in bits of the \method{confounded} MountainCar setup.}
    \label{tab:passive-discovery}
  
\end{wraptable}
In many learning scenarios, much information about the causal model can already be inferred passively from the data. This is the problem of causal discovery. Ideally, it would allow us to perform statistical analysis on the random variables in Fig~\ref{fig:graph} in the demonstration data to determine whether variable $X^t_i$ is a cause of the next expert action $A^t$ or a nuisance variable.

Causal discovery algorithms, such as the PC algorithm~\cite{spirtes2000causation} test a series of conditional independence relationships in the observed data and construct the set of possible causal graphs whose conditional independence relationships match the data. It does so by assuming \emph{faithfulness}, meaning the joint probability of random variables contains no more conditional independence relationships than the causal graph. In the particular case of the causal model in Fig~\ref{fig:graph}, it is easy to see that $X^t_i$ is a cause of $A^t$, and thus that the arrow $X^t_i \to A^t$ exists, if and only if $X^t_i \not\independent A^t | Z^t$, meaning that $X^t_i$ provides extra information about $A^t$ if $Z^t$ is already known.

We test this procedure empirically by evaluating the mutual information $I(X^t_i ; A^t | Z^t)$ for the \method{confounded} MountainCar benchmark, using the estimator from \citet{gao2017estimating}. The results in Table~\ref{tab:passive-discovery} show that all state variables are correlated with the expert's action, but that all become mostly independent given the confounder $Z^t$, implying none are causes.

Passive causal discovery failed because the critical faithfulness assumption is violated in the MountainCar case. Whenever a state variable $X^t_i$ is a deterministic function of the past $Z^t$, so that $X^t_i \independent A^t | Z^t$ always holds and a passive discovery algorithm concludes no arrow $X^t_i \to A^t$ exists. Such a deterministic transition function for at least a part of the state is very common in realistic imitation learning scenarios, making passive causal discovery inapplicable. Active interventions must thus be used to determine the causal model.

\section{Variational Causal Discovery}\label{app:variational-causal-discovery}
\begin{figure}[ht]
    \centering
    \vspace{-0.2in}
    \includegraphics[width=0.7\linewidth]{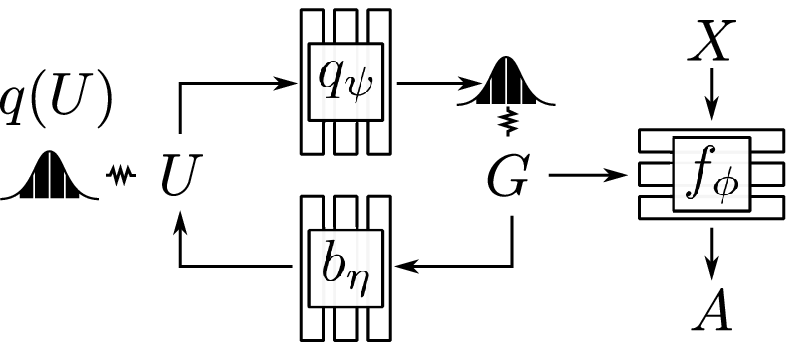}
    \caption{\small Training architecture for variational inference-based causal discovery as described in Appendix~\ref{app:variational-causal-discovery}. The policy network $f_\phi$ represents a mixture of policies, one corresponding to each value of the binary causal graph structure variable $G$. This variable in turn is sampled from the distribution $q_\psi(G|u)$ produced by an inference network from an input latent $U$. Further, a network $b_\eta$ regresses back to the latent $U$ to enforce that $G$ should not be independent of $U$.}
    \label{fig:arch_variational}
\end{figure}

The problem of discovering causal graphs from passively observed data is called causal discovery. The PC algorithm~\cite{spirtes2000causation} is arguably the most widely used and easily implementable causal discovery algorithm. In the case of Fig~\ref{fig:graph}, the PC algorithm would imply the absence of the arrow $X^t_i \rightarrow A^t$, if the conditional independence relation $A^t \independent X^t_i | Z^t$ holds,
which can be tested by measuring the mutual information.
However, the PC algorithm relies on \emph{faithfulness} of the causal graph. That is, conditional independence must imply d-separation in the graph. However, faithfulness is easily violated in a Markov decision process. If for some $i$, $X^t_i$ is a cause of the expert's action $A^t$ (the arrow $X^t_i \rightarrow A^t$ should exist), but $X^t_i$ is the result of a deterministic function of $Z^t$, then always $A^t \independent X^t_i | Z^t$ and the PC algorithm would wrongly conclude that the arrow $X^t_i \rightarrow A^t$ is absent.\footnote{More generally, faithfulness places strong constraints on the expert graph. For example, a visual state may contain unchanging elements such as the car frame in Fig~\ref{fig:causal_misidentification}, which are by definition deterministic functions of the past. As another example, goal-conditioned tasks must include a constant goal in the state variable at each time, which once again has deterministic transitions, violating faithfulness.}

We take a Bayesian approach to causal discovery~\citep{Heckerman2006bayesiancausal} from demonstrations. Recall from Sec~\ref{sec:problem} that the expert's actions $A$ are based on an unknown subset of the state variables $\{X_i\}_{i=1}^n$. Each $X_i$ may either be a cause or not, so there are $2^n$ possible graphs.
We now define a variational inference approach to infer a distribution over functional causal models (graphs and associated parameters) such that its modes are consistent with the demonstration data $D$.

While Bayesian inference is intractable, variational inference can be used to find a distribution that is close to the true posterior distribution over models.
We parameterize the structure $G$ of the causal graph as %
a vector of $n$ correlated Bernoulli random variables $G_k$, each indicating the presence of a causal arrow from $X_k$ to $A$. We assume a variational family with a point estimate $\theta_G$ of the parameters corresponding to graph $G$ and use a latent variable model to describe the correlated Bernoulli variables, with a standard normal distribution $q(U)$ over latent random variable $U$:
\begin{align*}
 q_\psi(G, \theta) &= q_\psi(G)[\theta = \theta_G] \\
 &= \int q(U) \prod_{k=1}^n q_\psi(G_k | U) [\theta = \theta_G] dU
 \label{eq:variational-family}
\end{align*}
We now optimise the evidence lower bound (ELBO):
\begin{align}
&\nonumber\argmin_q D_{KL}(q_\psi(G, \theta) | p(G, \theta | D)) = \\
&\argmax_{\psi,\theta} \sum_i \mathbb{E}_{U \sim q(U), G_k \sim q_\psi(G_k|U)} \\
&\quad\left[\log \pi(A_i | X_i, G, \theta_G) + \log p(G) \right]  + \mathcal{H}_q(G)
\label{eq:loss}
\end{align}

\paragraph{Likelihood}
$ \pi(A_i | X_i, G, \theta_G)$ is the likelihood of the observations $X$ under the FCM  $(G, \theta_G)$. It is modelled by a single neural network $f_\phi([X \odot G, G])$, where $\odot$ is the element-wise multiplication, $[\cdot,\cdot]$ denotes concatenation and $\phi$ are neural network parameters.
\paragraph{Entropy} The entropy term of the KL divergence, $\mathcal{H}_q$, acts as a regularizer to prevent the graph distribution from collapsing to the maximum a-posteriori estimate. It is intractable to directly maximize entropy, but a tractable variational lower bound can be formulated. Using the product rule of entropies, we may write:
\begin{align*}
\mathcal{H}_q(G) &= \mathcal{H}_q(G|U) - \mathcal{H}_q(U|G) + \mathcal{H}_q(U) \\
&=  \mathcal{H}_q(G|U) + I_q(U; G)
\end{align*}
In this expression, $\mathcal{H}_q(G|U)$ promotes diversity of graphs, while $I_q(U; G)$ encourages correlation among $\{G_k\}$. $I_q(U; G)$ can be bounded below using the same variational bound used in InfoGAN~\cite{chen2016infogan}, with a variational distribution $b_\eta$:
$I_q(U; G) \ge \mathbb{E}_{U,G \sim q_\psi}\log b_\eta(U|G) $.
Thus, during optimization, in lieu of entropy, we maximize the following lower bound:
$$\mathcal{H}_q(G) \ge \mathbb{E}_{U,G \sim q}\left[-\sum_k \log q_\psi(G_k|U) + \log b_\eta(U|G)\right]$$

\paragraph{Prior}
The prior $p(G)$ over graph structures is set to prefer graphs with fewer causes for action $A$---it is thus a sparsity prior: $$p(G) \propto \exp \sum_k [G_k=1]$$
\paragraph{Optimization}
Note that $G$ is a discrete variable, so we cannot use the reparameterization trick~\citep{kingma2013auto}. Instead, we use the Gumbel Softmax trick~\citep{gumbelsoftmax,concrete}
to compute gradients for training $q_\psi(G_k | U)$. Note that this does not affect $f_\phi$, which can be trained with standard backpropagation.

The loss of Eq~\ref{eq:loss} is easily interpretable independent of the formalism of variational Bayesian causal discovery. A mixture of predictors $f_\phi$ is jointly trained, each paying attention to diverse sparse subsets (identified by $G$) of the inputs. This is related to variational dropout~\citep{kingma2015dropout}.
Once this model is trained, $q_\psi(G)$ represents the hypothesis distribution over graphs, and $\pi_G(x)=f_\phi([x\odot G, G])$ represents the imitation policy corresponding to a graph $G$. Fig~\ref{fig:arch_variational} shows the architecture.

\paragraph{Usage for Targeted Interventions} In our experiments, we also evaluate the usefulness of causal discovery process to set a prior for the targeted interventions described in Sec~\ref{sec:intervention}. In Algorithm~\ref{alg:intervention-agreement} and ~\ref{alg:intervention-policy}, we implement this by initializing $p(G)$ to the discovered distribution (rather than uniform).

\section{Additional Results: Diagnosing Causal Misidentification}\label{app:diagnosis}
\begin{figure*}[htb]
    \centering
    \begin{subfigure}[b]{\textwidth}
        \includegraphics[width=0.9\textwidth]{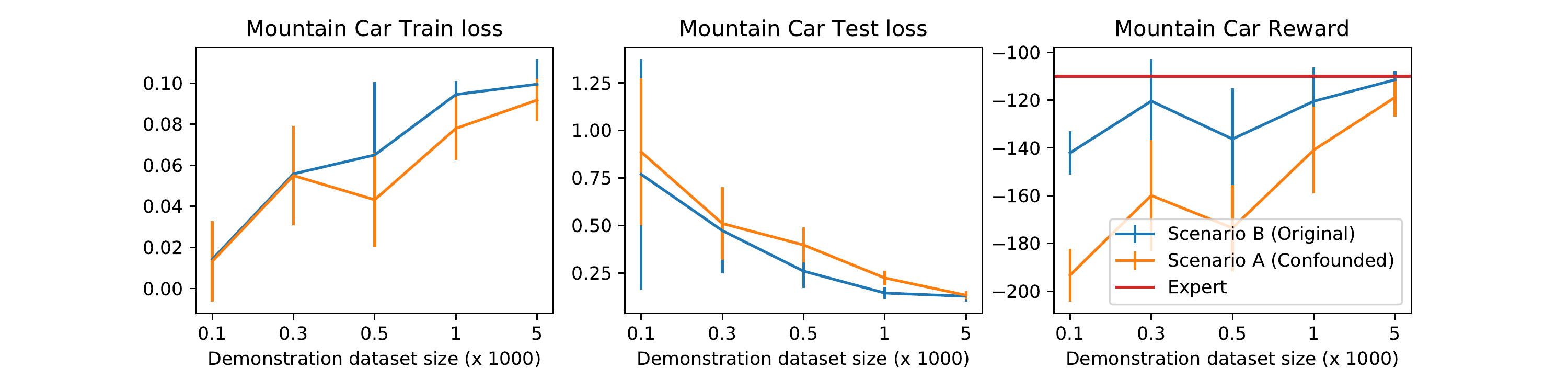}
        \label{fig:diagnosis-mountaincar}
    \end{subfigure}
    \begin{subfigure}[b]{\textwidth}
        \includegraphics[width=0.9\textwidth]{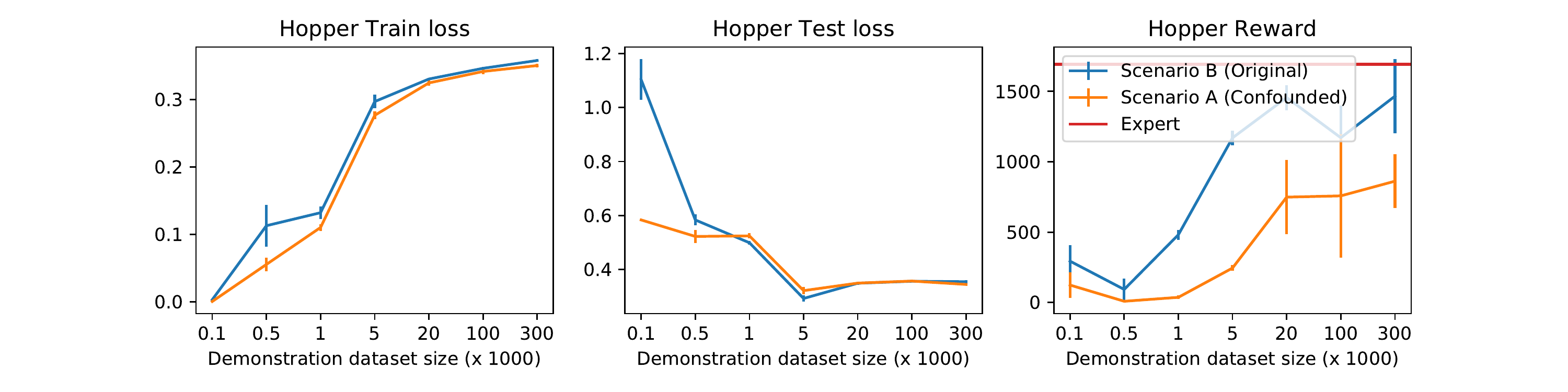}
        \label{fig:diagnosis-hopper}
    \end{subfigure}
    \begin{subfigure}[b]{\textwidth}
        \includegraphics[width=0.9\textwidth]{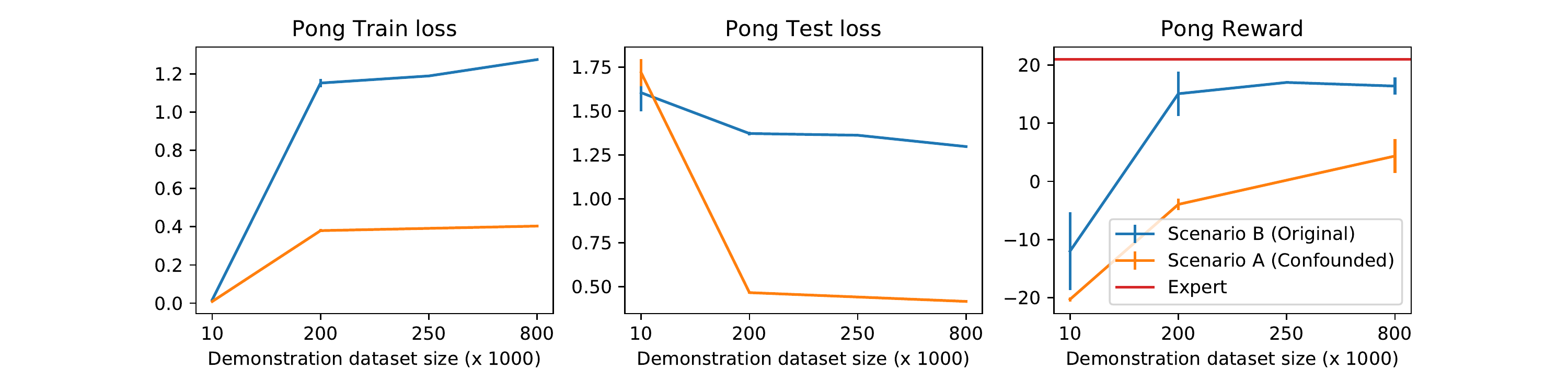}
        \label{fig:diagnosis-pong}
    \end{subfigure}
    \begin{subfigure}[b]{\textwidth}
        \includegraphics[width=0.9\textwidth]{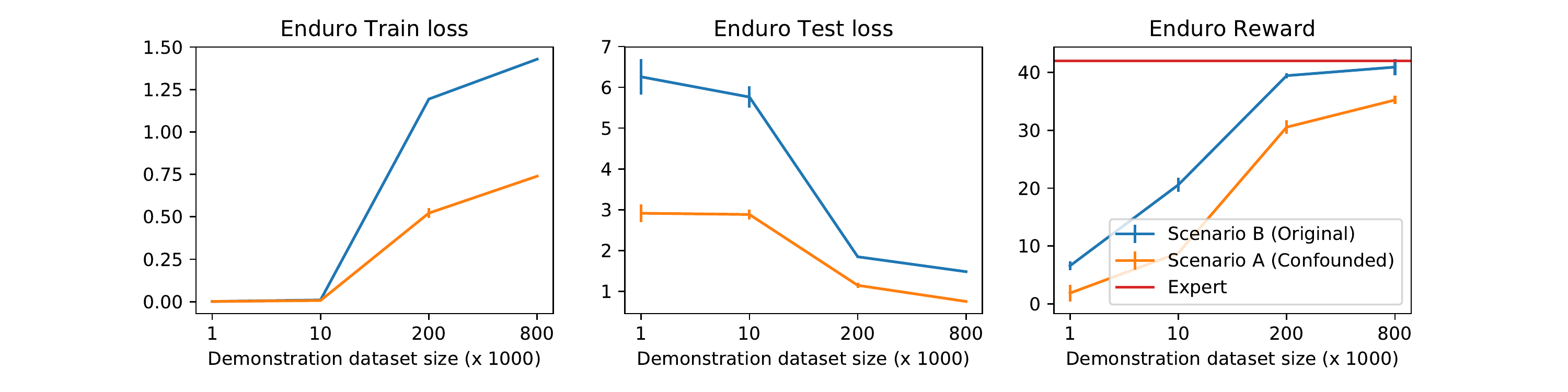}
        \label{fig:diagnosis-enduro}
    \end{subfigure}
    \begin{subfigure}[b]{\textwidth}
        \includegraphics[width=0.9\textwidth]{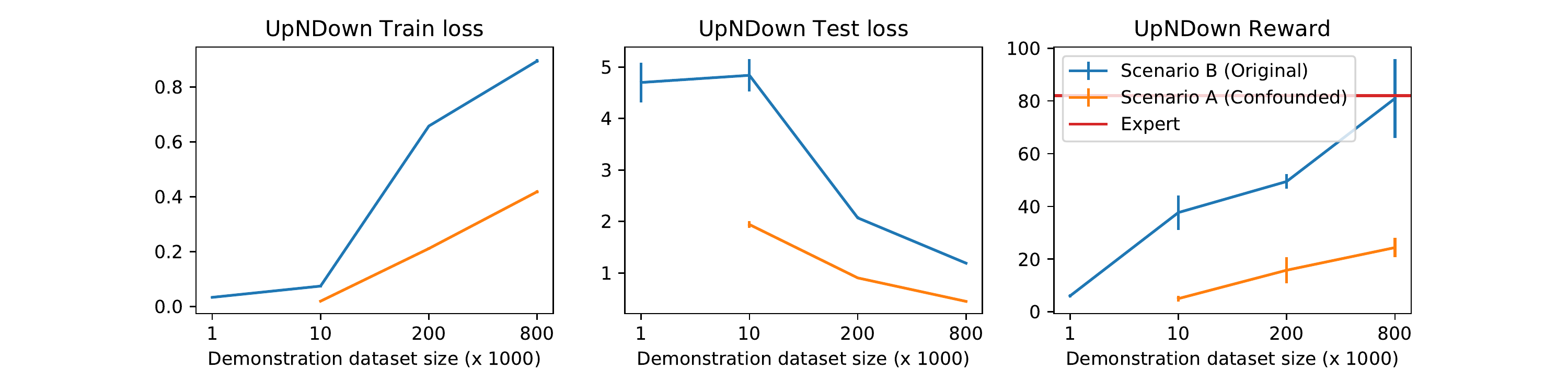}
        \label{fig:diagnosis-upndown}
    \end{subfigure}
    \caption{An expanded version of Fig~\ref{fig:diagnosis} in the main paper, demonstrating diagnosis of the causal misidentification problem in three settings. Here, the final reward, shown in Fig~\ref{fig:diagnosis} is shown in the third column. Additionally, we  also show the behavior cloning training loss (first column) and validation loss (second column) on trajectories generated by the expert. The x-axis for all plots is the number of training examples used to train the behavior cloning policy.}
    \label{fig:diagnosis_full}
\end{figure*}
In Fig~\ref{fig:diagnosis_full} we show the causal misidentification in several environments. We observe that while training and validation losses for behavior cloning are frequently near-zero for both the original and confounded policy, the confounded policy consistently yields significantly lower reward when deployed in the environment. This confirms the causal misidentification problem.

\section{DAgger with many more interventions}\label{app:dagger}
\begin{figure*}[htb]
    \centering
    \begin{subfigure}[b]{.48\textwidth}
        \includegraphics[width=\textwidth]{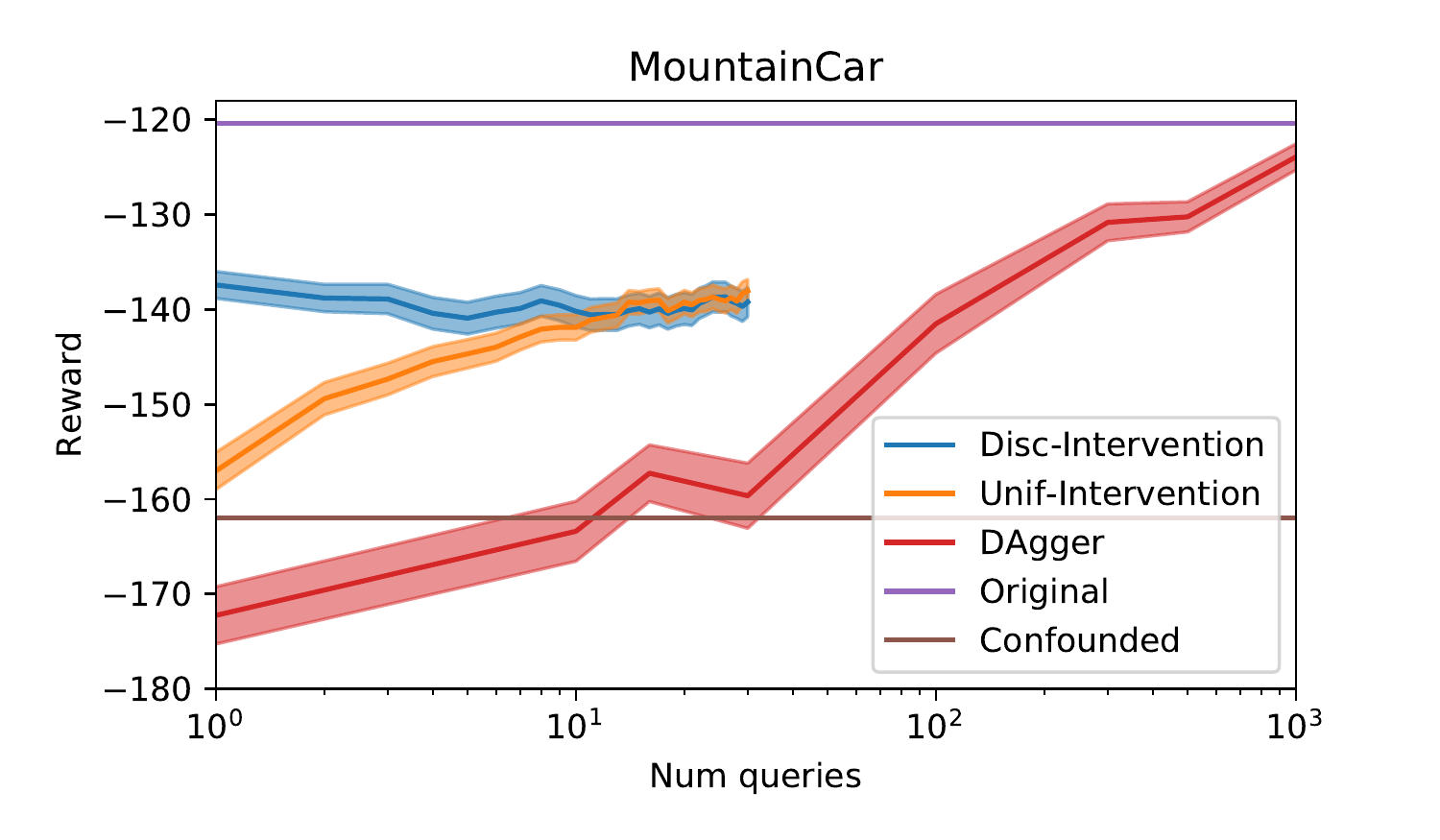}
        \caption{MountainCar}
    \end{subfigure}
    \begin{subfigure}[b]{.48\textwidth}
        \includegraphics[width=\textwidth]{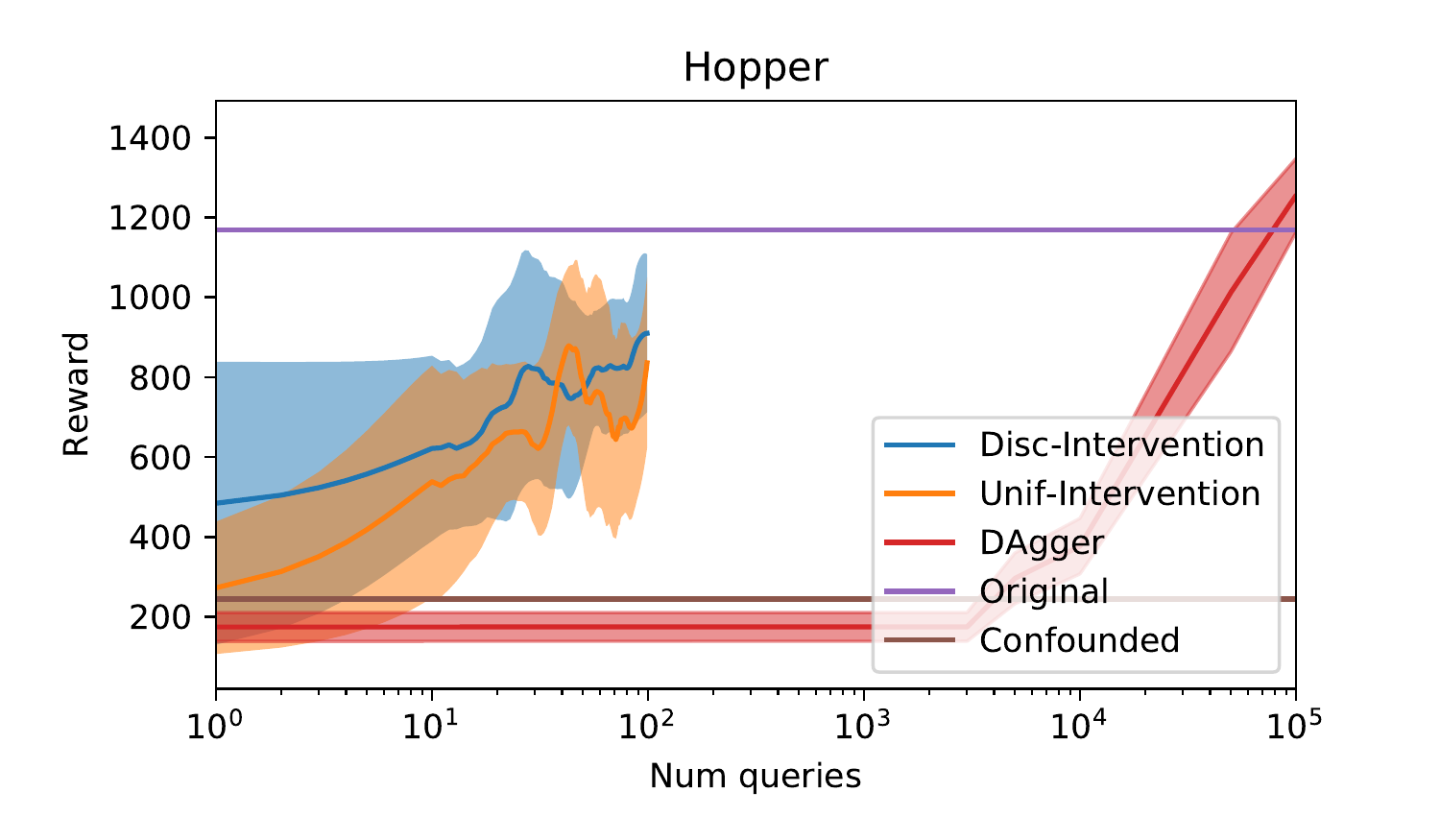}
        \caption{Hopper}
    \end{subfigure}
    \caption{DAgger results trained on the confounded state.}
    \label{fig:dagger}
\end{figure*}
In the main paper, we showed that DAgger performed poorly with equl number of expert interventions as our method. How many more samples does it need to do well? 

The results in Fig~\ref{fig:dagger} show that DAgger requires hundreds of samples before reaching rewards comparable to the rewards achieved by a non-DAgger imitator trained on the original state.

\section{GAIL Training Curves}\label{app:gail}
In Figure~\ref{fig:gail} we show the average training curves of GAIL on the original and confounded state. Error bars are 2 standard errors of the mean. The confounded and original training curve do not differ significantly, indicating that causal confusion is not an issue with GAIL. However, training requires many interactions with the environment.
\begin{figure}[ht!]
    \centering
    \includegraphics[width=0.5\textwidth]{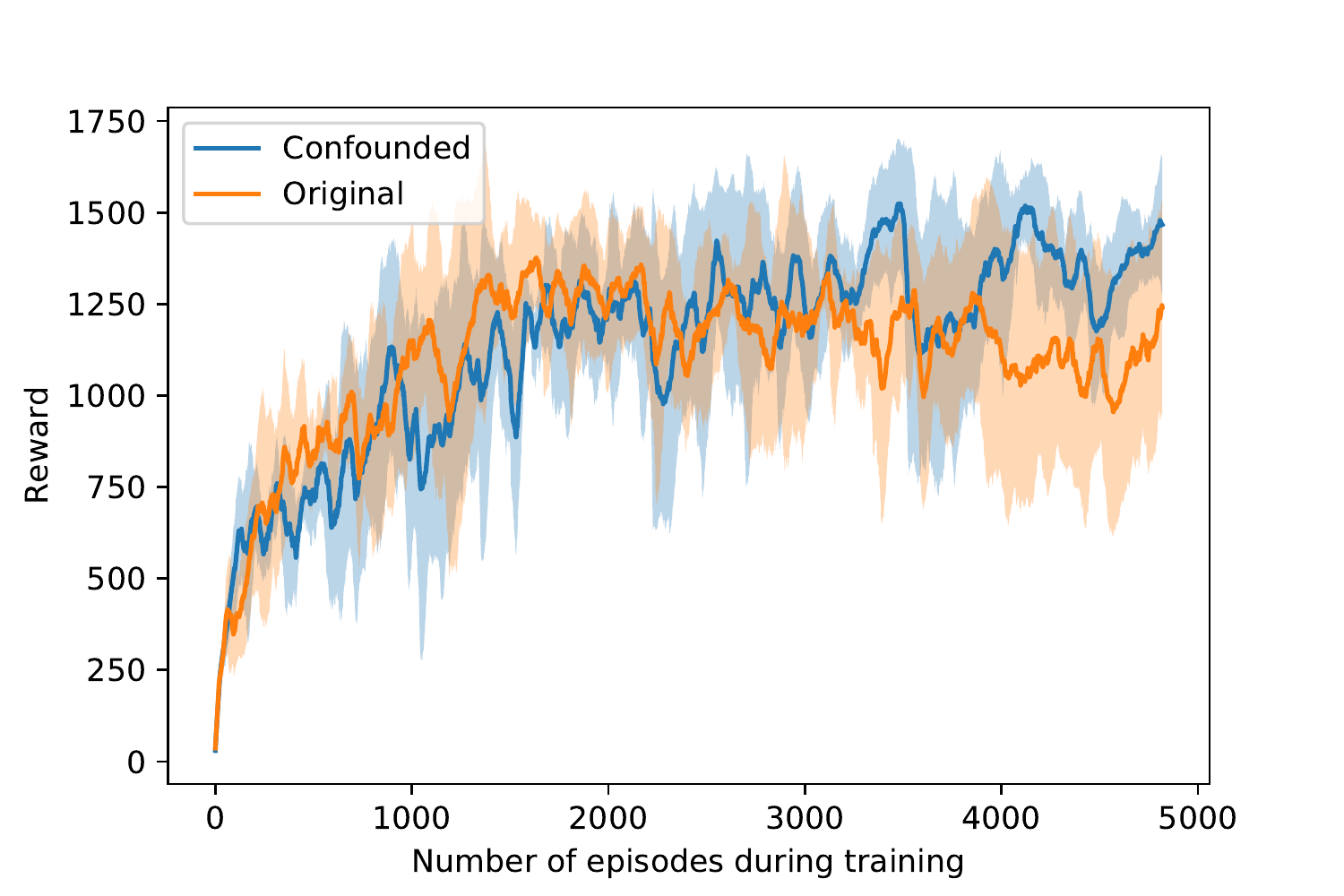}
    \caption{Rewards during GAIL training.}
    \label{fig:gail}
\end{figure}

\newpage
\section{Intervention Posterior Inference as Reinforcement Learning}\label{app:soft-q}

Given a method of evaluating the likelihood $p(\mathcal{O}|G)$ of a certain graph $G$ to be optimal and a prior $p_0(G)$, we wish to infer the posterior $p(G|\mathcal{O})$. The number of graphs is finite, so we can compute this posterior exactly. However, there may be very many graphs, so that impractically many likelihood evaluations are necessary. Only noisy samples from the likelihood can be obtained, as in the case of intervention through policy execution, where the reward is noisy, this problem is exacerbated.

If on the other hand, a certain structure on the policy is assumed, the sample efficiently can be drastically improved, even though policy can no longer be exactly inferred. This can be done in the framework of Variational Inference. For a certain variational family, we wish to find, for some temperature $\tau$:

\begin{align}
\pi(G) &= \argmin_{\pi(G)} D_{KL}(\pi(G)||p(\mathcal{O}|G)) \\
&=\argmin_{\pi(G)} \mathbb{E}_{\pi} \left[\log p(\mathcal{O}|G) + \log p_0(G) \right] + \tau \mathcal{H}_\pi(G) \label{eq:intervention-rl}
\end{align}

The variational family we assume is the family of independent distributions:
\begin{equation}
\pi(G)=\prod_i \pi_i(G_i)=\prod_i\text{Bernoulli}(G_i|\sigma(w_i/\tau))
\label{eq:independent-bernoulli}
\end{equation}

Eq~\ref{eq:intervention-rl} can be interpreted as a 1 step entropy-regularized MDP with reward $\tilde{r} = \log p(\mathcal{O}|G) + \log p_0(G)$ \cite{levine2018probrl}. It can be optimized through a policy gradient, but this would require many likelihood evaluations. More efficient is to use a value based method.  The independence assumption translates in a linear Q function: $Q(G) = \langle w, G \rangle + b$, which can be simply learned by linear regression on off-policy pairs $(G, \tilde{r})$. In Soft Q-Learning~\cite{haarnoja2017reinforcement} it is shown that the policy that maximizes Eq~\ref{eq:intervention-rl} is $\pi(G) \propto \exp Q(G)/\tau$, which can be shown to coincide in our case with Eq~\ref{eq:independent-bernoulli}:

\begin{align*}
    \pi(G) &= \frac{\exp(\langle w, G \rangle + b)/\tau}{\sum_{G'} \exp (\langle w, G' \rangle + b)/\tau}
    \propto \prod_i \exp(w_i G_i/\tau) \\
    \implies \pi(G) &= \prod_i\frac{\exp(w_i G_i/\tau)}{1+\exp w_i/\tau}=\prod_i\text{Bernoulli}(G_i|\sigma(w_i/\tau))
\end{align*}

\end{document}